\date{March 22, 2019}
\newcommand{\MR}{D^\text{MR}}
\newcommand{\SR}{\Pi^\text{SR}}
\newcommand{\SD}{\Pi^\text{SD}}
\newcommand{\ucrl}{{\small\textsc{UCRL2}}\@\xspace}
\newcommand{\ucrlb}{{\small\textsc{UCRL2B}}\@\xspace}
\newcommand{\rmaxbound}{r_{\max}}
\newcommand{\M}{\mathcal M}
\newcommand{\A}{\mathcal A}
\newcommand{\bigO}{\mathcal O}
\renewcommand{\L}{\mathcal L}
\newcommand{\calS}{\mathcal S}
\newcommand{\T}{^\intercal}
\renewcommand{\Re}{\mathbb R}
\newcommand{\Na}{\mathbb N}
\newcommand{\Fil}{\mathbb F}
\newcommand{\F}{\mathcal F}
\newcommand{\Np}[1][s,a]{N_k^+(#1)}
\newcommand{\E}{\mathbb{E}}
\newcommand{\Ex}[1]{\mathbb E \left[ #1 \right]}
\newcommand{\Proba}[1]{\mathbb P \left( #1 \right)}
\newcommand{\nextstates}{\Gamma}
\newcommand{\diameter}{(\rmaxbound D)}
\newcommand{\V}{\mathbb{V}}
\newcommand{\Var}[2]{\mathbb{V}_{#2} \left( #1 \right)}
\newcommand{\Varcond}[2]{\mathbb{V}\left({#1} \big| {#2}\right)}
\newcommand{\one}[1]{\mathbbm{1}\left\{ #1 \right\}}
\newcommand{\sigalg}{$\sigma$-algebra\@\xspace}
\newcommand{\pare}[1]{\left( #1 \right)}
\newcommand{\abs}[1]{\left| #1 \right|}
\newcommand{\rt}[1]{{r}_k(#1)}
\newcommand{\pt}[2]{{p}_k(#2|#1)}
\newcommand{\ph}[2]{\widehat{p}_k(#2|#1)}
\newcommand{\p}[2]{\wb{\wb{p_k}}(#2|#1)}
\newcommand{\h}{h_k}
\newcommand{\ie}{i.e.,\@\xspace}
\newcommand{\eg}{e.g.,\@\xspace}
\newcommand{\st}{s.t.\@\xspace}
\newcommand{\rv}{r.v.\@\xspace}
\newcommand{\as}{a.s.\@\xspace}
\newcommand{\iid}{i.i.d.\@\xspace}
\newcommand{\wt}[1]{\widetilde{#1}}
\newcommand{\wb}[1]{\overline{#1}}
\newcommand{\wh}[1]{\widehat{#1}}
\DeclareMathAccent{\wtilde}{\mathord}{largesymbols}{"65}
\newcommand{\lnn}[1]{\ln\left({#1}\right)}
\DeclareMathOperator*{\argmax}{\arg\,\max}
\newcommand{\SP}[1]{sp\left(#1\right)}
\newtheorem{theorem}{Theorem}
\newtheorem{lemma}[theorem]{Lemma}
\newtheorem{proposition}[theorem]{Proposition}
\newlength{\minipagewidth}
\newlength{\minipagewidthx}
\newcommand{\bookboxx}[1]{\small
\par\medskip\noindent
\framebox[0.99\textwidth]{
\begin{minipage}{0.97\dimexpr\textwidth-\parindent\relax} {#1} \end{minipage} } \par\medskip }
\title{Improved Analysis of UCRL2 with Empirical Bernstein Inequality}
\author{Ronan Fruit, Matteo Pirotta and Alessandro Lazaric}
\begin{document}

\maketitle
\begin{abstract}
        We consider the problem of exploration-exploitation in communicating Markov Decision Processes.
        We provide an analysis of UCRL2 with Empirical Bernstein inequalities (UCRL2B).
        For any MDP with $S$ states, $A$ actions, $\Gamma \leq S$ next states and diameter $D$, the regret of UCRL2B is bounded as $\widetilde{O}(\sqrt{D\Gamma S A T})$.
\end{abstract}

%\begin{center}
%        \color{red}
%        This is an extract from Ronan Fruit's thesis manuscript.
%\end{center}
\section{Introduction}
\citet{Jaksch10} introduced the reinforcement learning algorithm \ucrl and proved a regret bound of order $\wt{\mathcal{O}}(DS \sqrt{AT})$ for any communicating MDP with $S$ states, $A$ actions and diameter $D$.
\ucrl used Hoeffding inequalities to build an uncertainty set around rewards and transitions.
\citep{fruit2018constrained} exploited empirical Bernstein inequalities to prove a regret bound of $\wt{\mathcal{O}}(D \sqrt{\Gamma S AT})$ where $\Gamma := \max_{s,a} \Gamma(s,a) \leq S$ is the maximum number of possible next states.
In this document, we show that we can improve the analysis of \ucrl with empirical Bernstein bound (\ucrlb) and we show a regret bound of $\wt{O}(\sqrt{D \Gamma S A T})$.
This document is intended as a support to our tutorial at the 30th International Conference on \href{http://alt2019.algorithmiclearningtheory.org/}{Algorithmic Learning Theory (ALT 2019)}. For a more detailed analysis, please refer to~\citep{fruit2019thesis}.

\section{Preliminaries}
We consider a \emph{communicating} MDP \citep[Sec. 8.3]{puterman1994markov} $M = ( \calS, \A, p, r )$ with state space $\calS$ and action space $\A$.
Every state-action pair $(s,a)$ is characterized by a reward distribution with mean $r(s,a)$ and support in $[0, \rmaxbound]$, and a transition distribution $p(\cdot|s,a)$ over next states.
We denote by $S = |\calS|$ and $A = |\A|$ the number of states and action, by 
$\nextstates(s,a) = \|p(\cdot|s,a)\|_0$ the number of states reachable by selecting action $a$ in state $s$, and by $\nextstates = \max_{s,a} \nextstates(s,a)$ its maximum.
A stationary Markov randomized policy $\pi : \calS \rightarrow P(\A)$ maps states to distributions over actions.
The set of stationary randomized (resp. deterministic) policies is denoted by $\SR$ (resp. $\SD$).
Any policy $\pi \in \SR$ has an associated \emph{long-term average reward} (or gain) and a \emph{bias function} defined as
\begin{align*}
        g^\pi(s) := \lim_{T\to +\infty} \mathbb{E}^\pi_{s} \bigg[ \frac{1}{T}\sum_{t=1}^T r(s_t,a_t)\bigg]~~\text{ and }~~
        h^\pi(s) := \underset{T\to +\infty}{C\text{-}\lim}~\mathbb{E}^\pi_s \bigg[\sum_{t=1}^{T} \big(r(s_t,a_t) - g^\pi(s_t)\big)\bigg],
\end{align*}
where $\mathbb{E}^\pi_{s}$ denotes the expectation over trajectories generated starting from $s_1 = s$ with $a_t \sim \pi(s_t)$.
The bias $h^\pi(s)$
 measures the expected total difference between the reward and the stationary reward in \emph{Cesaro-limit} 
 %\footnote{For policies with an aperiodic chain, the standard limit exists.} 
 (denoted by $C\text{-}\lim$). Accordingly, the difference of bias $h^\pi(s)-h^\pi(s')$ quantifies the (dis-)advantage of starting in state $s$ rather than $s'$. We denote by $\SP{h^\pi} := \max_s h^\pi(s) - \min_{s} h^\pi(s)$ the \emph{span} of the bias function.
In weakly communicating MDPs, any optimal policy $\pi^\star \in \argmax_\pi g^\pi(s)$ has \emph{constant} gain, i.e., $g^{\pi^\star}(s) = g^\star$ for all $s\in\calS$. 
Moreover, there exists a policy $\pi^\star \in \argmax_\pi g^\pi(s)$ for which $(g^\star,h^\star) = (g^{\pi^\star},h^{\pi^\star})$ satisfy the \emph{optimality equation},
\begin{equation}\label{eq:optimality.equation}
        \forall s \in \calS, \qquad
        h^\star(s) + g^\star = L h^\star(s) := \max_{a \in \A} \{r(s,a) + p(\cdot|s,a)^\top h^\star\},
\end{equation}
where $L$ is the \emph{optimal} Bellman operator.
Finally,
    $D = \max_{s\neq s'} \{\tau(s \to s')\}$
denotes the diameter of $M$,
where $\tau(s\to s')$ is the minimal expected number of steps needed to reach $s'$ from $s$.

\textbf{Learning Problem.} 
Let $M^\star$ be the true MDP. We consider the learning problem where $\mathcal{S}$, $\mathcal{A}$ and $\rmaxbound$ are \emph{known}, while rewards $r$ and dynamics $p$ are \emph{unknown} and need to be estimated \emph{on-line}. We evaluate the performance of a learning algorithm $\mathfrak{A}$ after $T$ time steps by its cumulative \emph{regret} $\Delta(\mathfrak{A},T) = \sum_{t=1}^T (g^\star-r_t(s_t,a_t))$.

\begin{figure}[t]
\renewcommand\figurename{\small Figure}
\begin{minipage}{\columnwidth}
\bookboxx{
\textbf{Input:} Confidence $\delta \in ]0,1[$, $r_{\max}$, $\calS$, $\A$

\noindent \textbf{Initialization:} Set $t := 1$ and observe $s_1$ and for any $(s,a,s') \in \calS \times \A \times \calS$:
$N_1(s,a) = 0$,
$\wh{p}_1(s'|s,a) = 0$, $\wh{r}_1(s,a) = 0$, 
$\wh{\sigma}_{p,1}^2(s'|s,a) = 0$, $\wh{\sigma}_{r,1}^2(s,a) = 0$

\noindent \textbf{For} episodes $k=1, 2, ...$ \textbf{do}

\begin{enumerate}[leftmargin=4mm,itemsep=0mm,topsep=0mm]
\item Set $t_k \leftarrow t$ and episode counters $\nu_k (s,a) \leftarrow 0$

\item Compute the upper-confidence bounds (Eq.~\ref{eq:confidence_interval_p} and~\ref{eq:confidence_interval_r}) and
        %estimates $\wh{p}_k(s' | s,a)$, $\wh{r}_k(s,a)$ and a set
        the extended MDP
        $\mathcal{M}_k$ as in Eq.~\ref{eq:extendedmdp}
%         $\mathcal{M}_k^\diamond = \{\mathcal{M}_k, \text{ if }\calStrans_k = \emptyset;~
%         \wb{\mathcal{M}}_k^+
% \text{ otherwise} \}$.
       % ~$\mathcal{M}_k$ (\ucrl), resp.~$\mathcal{M}_k^\diamond$ (\tucrl)
%$\wt{\mathcal{M}}_k^\ddagger$ (based on Thm.~\ref{def:augmented.perturbed.mdp} with $\eta = 1/\sqrt{t_k}$)

\item
% Compute (an approximation of) the optimal policy $\widetilde{\pi}_k$ and the optimistic MDP $\widetilde{M}_k$, that is,
% $$(\wt{M}^\star_k, \wt\pi^\star_k ) \in \argmax_{M\in\mathcal{M}_k, \pi} g^\pi_M.$$
Compute an $\rmaxbound/t_k$-approximation $\pi_k$ of
% $(\wt{M}_k, \wt{\pi}_k) = \argmax_{M \in \mathcal{M}_k^\diamond, \pi} \{g^{\pi}_M\}$
 Eq.~\ref{eq:optim_ucrl2}: $(g_k,h_k, \pi_k) = EVI( \L_{\alpha}^k,  \mathcal{G}_{\alpha}^k,\frac{\rmaxbound}{t_k},0 , s_1)$

%\item Use \regopt to compute an $1/\sqrt{t_k}$-approximation $\widetilde{\pi}_k$ of the solution of Eq.~\ref{eq:optimistic.policy.relaxedregalc} where $\mathcal{M}_k$ is replaced by $\mathcal{M}_k^{\ddagger}$ 
%
\item Sample action $a_t \sim \pi_k(\cdot|s_t)$

\item \textbf{While} 
        % $t_k = t$ \textbf{or} 
        % $\nu_k(s_t,a_t) \leq \max\{1, N_k(s_t,a_t)\}$ 
        True
        \textbf{do}
\begin{enumerate}%[leftmargin=4mm,itemsep=0.5mm]
        \item Execute $a_t$, obtain reward $r_{t}$, and observe $s_{t+1}$
        \item Set $\nu_k (s_t,a_t) \leftarrow \nu_k (s_t,a_t) + 1$ 
        \item \textbf{If} $\nu_k(s_t,a_t) \geq \max\{1, N_k(s_t,a_t)\}$ \textbf{then}
                \begin{itemize}
                        \item Set $t \leftarrow t + 1$ and {\color{red!50!black}break}
                \end{itemize}
        \item \textbf{Else}
                \begin{itemize}
                        \item Sample action $a_{t+1} \sim \pi_k(\cdot|s_{t+1})$ and set  $t \leftarrow t + 1$
                \end{itemize}
\end{enumerate}

\item Set $N_{k+1}(s,a) \leftarrow N_{k}(s,a)+ \nu_k(s,a)$ 
\item Update statistics (\ie $\wh{p}_{k+1}, \wh{r}_{k+1}, \wh{\sigma}_{p,k+1}^2$ and $\wh{\sigma}_{r,k+1}^2$) \tikz[baseline]{\node[anchor=base](alg_lastline){};}
\end{enumerate}
}
% \vspace{-.3cm}
\caption{\ucrlb algorithm.}
\label{fig:ucrlb}
\end{minipage}
% \vspace{-.5cm}
\end{figure}

\section{UCRL2B}

\ucrlb is a variant of \ucrl~\citep{Jaksch10} that construct confidence intervals based on the empirical Bernstein inequality~\citep{audibert2007tuning} rather than Hoeffding's inequality.
As \ucrl, \ucrlb proceeds through episodes $k = 1, 2 \ldots$. At the beginning of each episode $k$, UCRL computes a set of plausible
MDPs defined as 
\begin{align}\label{eq:extendedmdp}
        \mathcal{M}_k = \bigg\{ M = \langle \calS, \A, \wt r, \wt p \rangle \; :\; \wt r(s,a)  \in B_r^k(s,a), \wt p(s'|s,a) \in B_p^k(s,a,s'), \sum_{s'} \wt p(s'|s,a) = 1 \bigg\},
\end{align}
where $B_r^k$ and $B_p^k$ are high-probability confidence intervals on the rewards and transition probabilities of the true MDP $M^\star$, which guarantees that (see App.~\ref{app:conf_interval})
\[\Proba{ \exists k \geq 1, \text{ \st~} M^\star \not\in {\mathcal{M}}_k } \leq \frac{\delta}{3}.\]
%$M^\star \in \mathcal{M}_k$ w.h.p.
As mentioned,  we use confidence intervals constructed using empirical Bernstein’s inequality~\citep[][Thm. 1]{Audibert:2009:ETU:1519541.1519712}
\begin{align}
          \beta_{p,k}^{sas'} &:= 2 \sqrt{\frac{ \wh{\sigma}^2_{p,k}(s'|s,a)}{\Np} \lnn{\frac{6 SA \Np}{\delta}}} + \frac{6 \lnn{\frac{6 SA \Np}{\delta}}}{\Np} \label{eq:bernstein_confidence_bound_p}\\
     \beta_{r,k}^{sa} &:= 2 \sqrt{\frac{ \wh{\sigma}^2_{r,k}(s,a)}{\Np} \lnn{\frac{6 SA \Np}{\delta}}} + \frac{6 \rmaxbound \lnn{\frac{6 SA \Np}{\delta}}}{\Np} \label{eq:bernstein_confidence_bound_r}
\end{align}
where $N_k(s, a)$ is the number of visits in $(s, a)$ before episode $k$, $N_k^+(s,a) = \max\{ 1, N_k(s,a)\}$, $\wh{\sigma}^{2}_{p,k}$ and $\wh{\sigma}^2_{r,k}$ are the population variance of transition and reward function at episode $k$.
We define by $\wh r_k$ and $\wh p_k$ the empirical average of rewards and transitions:
\begin{align*}
        \wh r_k(s,a) := \frac{1}{N_k(s,a)} \sum_{t=1}^{t_k-1} \one{s_t,a_t=s,a} \cdot r_t ~~\text{ and }~~
        \wh p_k(s'|s,a) := \frac{1}{N_k(s,a)} \sum_{t=1}^{t_k-1} \one{s_t,a_t,s_{t+1}=s,a,s'}
\end{align*}
where $t_k$ is the starting time of episode $k$
The estimated transition probability $\wh{p}_k(s'|s,a)$ correspond to the sample mean of i.i.d. Bernouilli \rv with mean ${p}(s'|s,a)$ and therefore the population variance can be easily computed as $\wh{\sigma}^2_{p,k}(s'|s,a) := \wh{p}_k(s'|s,a)\left( 1 - \wh{p}_k(s'|s,a) \right)$.
The population variance of the reward can be computed recursively at the end of every episode:
\begin{align*}
 \wh{\sigma}^2_{r,k+1}(s,a) &:= \frac{1}{N_{k+1}^+(s,a)} \left(\sum_{l=1}^{k} S_l(s,a) \right) - \left(\wh{r}_{k+1}(s,a)\right)^2\\
 &~= \frac{S_k(s,a)}{N_{k+1}^+(s,a)}  + \frac{N_{k}(s,a)}{N_{k+1}^+(s,a)}\left(\wh{\sigma}^2_{r,k}(s,a) + \left(\wh{r}_{k}(s,a)\right)^2 \right) - \left(\wh{r}_{k+1}(s,a)\right)^2.
\end{align*}
where $S_k(s,a) := \sum_{t=1}^{t_k -1} \one{s_t,a_t=s,a} \cdot r_t^2$.
The extended MDP $\mathcal{M}_k$ is defined by the compact sets
\begin{align}
   B_p^k(s,a,s') &:= \left[\wh{p}_k(s'|s,a) - \beta_{p,k}^{sas'},\wh{p}_k(s'|s,a) + \beta_{p,k}^{sas'}\right] \cap \big[0,1\big] \label{eq:confidence_interval_p} \\
   B_r^k(s,a) &:= \left[\wh{r}_k(s,a) - \beta_{r,k}^{sa}, \wh{r}_k(s,a) + \beta_{r,k}^{sa}\right] \cap \big[0, \rmaxbound\big] \label{eq:confidence_interval_r}
\end{align}

As \ucrl, \ucrlb executes a policy $\pi_k$ which is an approximate solution to the following optimization problem:
\begin{align}\label{eq:optim_ucrl2}
% \sup_{M \in \M} \left\{\sup_{\pi \in \Pi^{\text{SD}}(M)} \left\{ \min_{s \in \calS} \left\{g^\pi_M(s) \right\} \right\} \right\}
        g_k^\star := \sup_{M' \in \M_k} \left\{\max_{\pi \in \Pi^{\text{SD}}} g^\pi_{M'}  \right\} = \sup_{M' \in \M_k} g^\star_{M'}  .
\end{align}
Since $M^\star \in \mathcal{M}_k$ w.h.p., it holds that $g_k^\star \geq g^\star_{M^\star}$.
An approximated solution can be computed using Extended Value Iteration (EVI)~\citep{Jaksch10}. 
For technical reasons, we do not apply EVI directly to $\mathcal{M}_k$ but to $\mathcal{M}^k_\alpha$, where $\alpha$ is the coefficient of the aperiodicity transformation.
EVI iteratively applies the following extended aperiodic optimal Bellman operator $\L_{\alpha}^k$:
\begin{align}\label{eq:extended_bellman_operator4}
 \L_{\alpha}^k v (s) :=\underset{a\in \A_s}{\max} \left\{ \underset{r \in B_r(s,a)}{\max}\left\{ r \right\} + \alpha \cdot \max_{p\in B_p^k(s,a)} \left\{ p\T v \right\} \right\} + (1-\alpha)\cdot v(s).
\end{align}
where $B_p^k(s,a):= \left\{ p \in \Delta_S:~ p(s') \in B_p^k(s,a,s'),~ \forall s' \in \calS \right\}$ and $\Delta_S$ is the $S$-dimensional simplex.
We arbitrarily set $\alpha = 0.9$.
We recall that, by properties of the aperiodicity transformation, the optimal gains of $\M_{\alpha}^k$ and $\M_k$ are equal (denoted by $g_k^\star$).
If we ran EVI (see Alg.~\ref{alg:vi}) on $\mathcal{M}_{\alpha}^k$ with accuracy $\epsilon_k = \rmaxbound / t_k$, we have that
\begin{align}
 &|g_k - g_k^\star| \leq \varepsilon_k/2~ := \frac{\rmaxbound}{2t_k} \\
 \text{and }~  &\| \L_{\alpha}^k h_k - h_k -g_k e \|_{\infty} \leq \varepsilon_k := \frac{\rmaxbound}{t_k}. \label{eq:near.optimality.equation}
\end{align}
where $(g_k,h_k, \pi_k) = EVI( \L_{\alpha}^k,  \mathcal{G}_{\alpha}^k,\frac{\rmaxbound}{t_k},0 , s_1)$.\footnote{The extended greedy operator is defined as 
\begin{align}\label{eq:greedy.operator.ucrl2}
        \forall s\in\calS, \forall v\in\Re^S, ~~\mathcal{G}_k v(s) \in \argmax_{a\in\A_s} \left\{ \max_{r \in B_r^k(s,a)} r  + \max_{p\in B_p^k(s,a)} p\T v \right\}.
\end{align}
}
We denote by $r_k$ and $p_k$ the optimistic reward and transitions at episode $k$.

%We recall that \evi is an intance of value iteration that iteratively applies the extended optimal Bellman operator $\L_k$:
%\begin{align}\label{eq:extended_bellman_operator2}
%  \forall v \in \Re^S, ~\forall s \in \calS, ~~\L_k v (s) := \max_{a \in \A_s} \left\{ \max_{r \in B_r^k(s,a)} \{r \} + \max_{p\in B_p^k(s,a)} \left\{ p\T v \right\}   \right\}.
%\end{align}
%where $B_p^k(s,a):= \left\{ p \in \Delta_S:~ p(s') \in B_p^k(s,a,s'),~ \forall s' \in \calS \right\}$ and $\Delta_S$ is the $S$-dimensional simplex.

\paragraph{Regret Bound.}
We can now provide the improved regret bound for \ucrlb\\[.3cm]
\tikz[baseline]{
        \node[text width=0.98\textwidth, fill=CornflowerBlue!5, inner sep=2pt] {
\begin{theorem}\label{thm:regret.bound2}
There exists a numerical constant $\beta >0$ such that for \textup{any} communicating MDP, with probability at least $1-\delta$, it holds that for all initial state distributions $\mu_1 \in \Delta_S$ and for all time horizons $T>1$
  \begin{equation}\label{eq:regret.bound2}
  \begin{aligned}
          \Delta(\text{\ucrlb},T) \leq \beta &\cdot  \rmaxbound\sqrt{ D \left(\sum_{s,a} \nextstates(s,a)\right) T \lnn{\frac{T}{\delta}}\lnn{T}} \\
  &+ \beta \cdot \rmaxbound D^2  S^2 A  \lnn{\frac{T}{\delta}}\lnn{T}
  \end{aligned}
 \end{equation}
 % where $\nextstates(s,a) := \| p(\cdot|s,a) \|_0$.
\end{theorem}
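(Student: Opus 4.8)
The plan is to follow the optimism-based regret decomposition of \citet{Jaksch10}, but to replace the crude per-step bound $\SP{\h}\le\rmaxbound D$ on the transition term by a variance-sensitive bound read off the empirical Bernstein intervals~\eqref{eq:bernstein_confidence_bound_p}, and then to control the resulting \emph{sum of local variances} of the optimistic bias by a law-of-total-variance argument; it is this second ingredient that turns the factor $D$ of~\citep{fruit2018constrained} into $\sqrt{D}$. Concretely, I would first intersect three events: that $M^\star\in\M_k$ for all $k$ (probability $\ge 1-\delta/3$ by the confidence intervals), and two Azuma/Freedman events of probability $1-\delta/3$ each, one for the reward noise $r_t-r(s_t,a_t)$ and one for the bias martingale below. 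On this event the reward noise contributes only $\wt{\bigO}(\rmaxbound\sqrt{T})$, and optimism gives $g^\star\le g_k^\star\le g_k+\varepsilon_k/2$, so the regret reduces to controlling $\sum_t (g_k - r(s_t,a_t))$.

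\textbf{Per-episode expansion.} Using the near-optimality equation~\eqref{eq:near.optimality.equation} at each visited greedy pair and undoing the aperiodicity transformation (which only rescales the deviation by $\alpha$ and leaves the gain unchanged), I would write, up to the additive slack $\varepsilon_k$,
\begin{equation*}
 g_k - r(s_t,a_t) \le \underbrace{\big(r_k - r\big)(s_t,a_t)}_{\text{(R)}} + \underbrace{\big(p_k(\cdot|s_t,a_t)-p(\cdot|s_t,a_t)\big)\T \h}_{\text{(P)}} + \underbrace{\big(p(\cdot|s_t,a_t)\T \h - \h(s_{t+1})\big)}_{\text{(M)}} + \big(\h(s_{t+1})-\h(s_t)\big).
\end{equation*}
The last term telescopes over each episode to at most $\SP{\h}\le\rmaxbound D$, summing to $\wt{\bigO}(\rmaxbound D\,SA)$ over the $\bigO(SA\ln T)$ episodes; term (M) is a martingale difference whose conditional variance is exactly $\Var{\h}{p(\cdot|s_t,a_t)}$; and term (R) is bounded by $2\beta_{r,k}^{s_ta_t}$ and summed via the standard counting lemmas to a lower-order contribution.

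\textbf{Transition term and variance sum (main obstacle).} For (P), since both $p_k$ and $p$ lie in $B_p^k$, I would bound it by $2\sum_{s'}\beta_{p,k}^{s_ta_ts'}\,|\h(s')-c|$ with the centering $c=\wh{p}_k(\cdot|s_t,a_t)\T\h$, so that $|\h(s')-c|\le\SP{\h}$. Plugging in the Bernstein width and applying Cauchy--Schwarz over the at most $\nextstates(s,a)$ supported next states yields, for the dominant term, $\text{(P)}\lesssim \sqrt{\nextstates(s_t,a_t)\,\Np[s_t,a_t]^{-1}\lnn{SAT/\delta}}\cdot\sqrt{\Var{\h}{\wh{p}_k(\cdot|s_t,a_t)}}$, plus a $\bigO(\SP{\h}\,\nextstates\lnn{T}/\Np)$ remainder from the second Bernstein term. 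A further Cauchy--Schwarz over time splits the sum into $\sqrt{\sum_t \nextstates(s_t,a_t)/\Np[s_t,a_t]}$, which the counting lemma bounds by $\wt{\bigO}(\sqrt{\sum_{s,a}\nextstates(s,a)})$, times $\sqrt{\sum_t \Var{\h}{p(\cdot|s_t,a_t)}}$ (after replacing $\wh{p}_k$ by $p$ at lower-order cost). The crux is then to prove
\begin{equation*}
 \sum_t \Var{\h}{p(\cdot|s_t,a_t)} = \bigO\!\big(\SP{\h}\,\rmaxbound\,T\big) = \bigO\!\big(\rmaxbound^2 D\,T\big),
\end{equation*}
rather than the trivial $(\SP{\h})^2 T$. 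I would get this by writing $\Var{\h}{p_k}=p_k\T(\h^2)-(p_k\T\h)^2$, replacing $p_k\T\h$ by $\h(s_t)+g_k-r_k(s_t,a_t)$ through the Bellman equation and $p_k\T(\h^2)$ by $\h(s_{t+1})^2$ up to a martingale, so that the squared-bias terms cancel and the telescope leaves only $2\sum_t \h(s_t)\big(g_k-r_k\big)+\sum_t(g_k-r_k)^2$; since $|g_k-r_k|\le\rmaxbound$ and $|\h(s_t)|\le\SP{\h}$, this is $\bigO(\SP{\h}\,\rmaxbound\,T)$. Carrying the error terms of this identity — the aperiodicity rescaling, the $\varepsilon_k$ slack, and the $p_k\to p$ swap — through cleanly is the main obstacle, since it is here that the $\sqrt{D}$ improvement is won.

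\textbf{Assembling.} Combining the two Cauchy--Schwarz factors gives the leading term $\beta\,\rmaxbound\sqrt{D(\sum_{s,a}\nextstates(s,a))T\lnn{T/\delta}\lnn{T}}$. The martingale (M), bounded by Freedman's inequality through the same variance sum, contributes only $\wt{\bigO}(\rmaxbound\sqrt{DT})$; the second Bernstein remainder, the reward widths, the telescoped spans, and the $\varepsilon_k$ errors all collect into the lower-order term $\beta\,\rmaxbound D^2 S^2 A\,\lnn{T/\delta}\lnn{T}$, using $\SP{\h}\le\rmaxbound D$ and $\nextstates\le S$. A final union bound over the three events yields the claim with probability $1-\delta$.
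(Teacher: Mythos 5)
Your proposal is correct and follows essentially the same route as the paper's proof: the same optimism/Bellman regret decomposition with Freedman's inequality for the martingale terms, the same empirical-Bernstein width bound with centering at $\wh{p}_k(\cdot|s,a)\T h_k$ and Cauchy--Schwarz over the $\nextstates(s,a)$ supported next states, and---crucially---the same bound $\sum_t \Var{h_{k_t}}{\cdot} = \bigO\big(\rmaxbound^2 D T + \cdot\big)$ obtained by substituting the (near-)optimality equation for $p_k\T h_k - h_k(s_t)$ and telescoping the squared bias, which is exactly the paper's reverse-triangle-inequality step for its dominant term $\textcircled{3}$. The only point where the paper is more explicit is in absorbing the resulting $T^{1/4}$ remainder (coming from the kernel swap inside the variance sum) into the stated two-term bound, via a case analysis on whether $T \geq D^2 \big(\sum_{s,a}\nextstates(s,a)\big)\lnn{T/\delta}$ together with the trivial bound $\Delta \leq \rmaxbound T$, whereas you fold this into ``lower-order terms''; this is a matter of bookkeeping, not a gap in the argument.
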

\vspace{20pt}
};
}

\citet{Jaksch10} showed that up to a multiplicative numerical constant, the regret of \ucrl is bounded by $\rmaxbound DS\sqrt{AT\lnn{T/\delta}}$. After noticing that $\sum_{s,a}\nextstates(s,a) \leq \nextstates SA$ we can simplify the bound in \eqref{eq:regret.bound2} as \[ \beta \cdot \rmaxbound \sqrt{D \nextstates SA T \lnn{T/\delta}} + \beta \cdot \rmaxbound D^2 S^2 A \lnn{T/\delta} \lnn{T}\]

\begin{figure}[t]
\renewcommand\figurename{\small Figure}
\begin{minipage}{\columnwidth}
\bookboxx{
\textbf{Input:} Bellman operator $L: \Re^S \mapsto \Re^S$, greedy policy operator $G: \Re^S \mapsto \MR$, accuracy $\varepsilon \in ]0,\rmaxbound[$, initial vector $v_0\in\Re^S$, arbitrary reference state $\wb{s} \in \calS$ 

\noindent \textbf{Initialization:} $n=0$, $v_1 = L v_0$  

\noindent \textbf{While} $\SP{v_{n+1} - v_n} > \varepsilon$ \textbf{do}

\begin{enumerate}[leftmargin=4mm,itemsep=0mm,topsep=0mm]
\item Increment $n \leftarrow n+1$
\item Shift $v_n \leftarrow v_n - v_n(\wb{s})e$
\item $(v_{n+1}, d_n) := (L v_{n}, G v_{n})$
\end{enumerate}

\noindent Set $g := \frac{1}{2} \Big( \max\{v_{n+1} - v_n\} + \min\{v_{n+1} - v_n\} \Big)$, $h := v_n$ and $\pi := (d_n)^\infty$

\noindent \textbf{Return} gain $g$, bias $h$, policy $\pi$

}
% \vspace{-.3cm}
\caption{(Relative) Value Iteration.}
\label{alg:vi}
\end{minipage}
% \vspace{-.5cm}
\end{figure}

% \begin{algorithm}[t]
% \caption{(Relative) Value Iteration}
% \label{alg:vi}
% \begin{algorithmic}[1]
% \renewcommand{\algorithmicrequire}{\textbf{Input:}}
% \renewcommand{\algorithmicensure}{\textbf{Output:}}
% \Require{Operators $L: \Re^S \mapsto \Re^S$ and $G: \Re^S \mapsto \MR$, accuracy $\varepsilon \in ]0,\rmaxbound[$, initial vector $v_0\in\Re^S$, arbitrary reference state $\wb{s} \in \calS$}
% \Ensure{Gain $g\in[0,\rmaxbound]$, bias vector $h\in \Re^S$ and stationary deterministic policy $\pi \in \SD$}
% \State Initialize $n=0$
% \State $v_1 := L v_0$
% \While{$\SP{v_{n+1} - v_n} > \varepsilon$} \Commentc{Loop until termination}
%     \State Increment $n \leftarrow n+1$
%     \State Shift $v_n \leftarrow v_n - v_n(\wb{s})e$ \Commentc{Avoids numerical instability ($v_n \not\to +\infty$)}
%     \State $(v_{n+1}, d_n) := (L v_{n}, G v_{n})$  \Commentc{$L v_{n}$ and $G v_{n}$ can be computed simultaneously}
% \EndWhile
% \State Set $g := \frac{1}{2} \Big( \max\{v_{n+1} - v_n\} + \min\{v_{n+1} - v_n\} \Big)$, $h := v_n$ and $\pi := (d_n)^\infty$
% \end{algorithmic}
% \end{algorithm}

\section{Improved regret analysis for \ucrlb}\label{chap:ucrlb:sec:regret.proof.improved}
We now report the standard regret decomposition~\citep[\eg][]{fruit2018constrained}.
The regret after $T$ time steps is defined as
$
 \Delta(\text{\ucrlb},T) = \sum_{t=1}^T \Big(g^\star - r_t\Big)
$. To begin with, we replace $r_t$ by its expected value \emph{conditioned} on the current state $s_t$ using a martingale argument.
Let's denote by $\nu_k(s):= \sum_{a \in \A_ s} \nu_k(s,a)$ the total number of visits in state $s$ during episode $k$.
Defining ${\color{CadetBlue}\Delta_k}:=\sum_{s \in \mathcal{S}} \nu_k(s) \left(g^\star - \sum_{a \in \mathcal{A}_{s_t}} \pi_{k}(a|s)r(s,a)\right)$ the pseudo-regret of episode $k$, it holds with probability at least $1-\frac{\delta}{6}$ that for all $T\geq 1$:
\begin{align}\label{eqn:splitting}
% \begin{split}
 R(\text{\ucrlb},T) &\leq \sum_{t=1}^T \bigg(g^* - \sum_{a \in\mathcal{A}_{s_t}} \pi_{k_t}(s_t,a) r(s_t,a) \bigg) + 2\rmaxbound\sqrt{T\ln \left(\frac{4T}{\delta}\right)}\nonumber\\
 &= \sum_{k=1}^{k_T} \sum_{s \in \calS} \nu_k(s)  \bigg( g^* - \sum_{a \in\mathcal{A}_{s}} \pi_{k}(a|s) r(s,a) \bigg) + 2\rmaxbound\sqrt{T\ln \left(\frac{4T}{\delta}\right)}\nonumber\\
 &= \sum_{k=1}^{k_T} {\color{CadetBlue}\Delta_k} + 2\rmaxbound\sqrt{T\ln \left(\frac{4T}{\delta}\right)}
% \end{split}
\end{align}
where $k_T = \sup \{ k\geq 1 : t \geq t_k\}$. By using optimism and the Bellman equation, we further decompose ${\color{CadetBlue} \Delta_k }$ as~\citep[see \eg][for more details]{fruit2018constrained,fruit2019thesis}
\begin{align*}
  {\color{CadetBlue}\Delta_k} \leq {\color{Orange}\Delta_k^p} +  {\color{PineGreen}\Delta_k^r} + \frac{3\varepsilon_k}{2} \sum_{s \in \calS} \nu_k(s)
\end{align*}
with ${\color{PineGreen}\Delta_k^r} = \sum_{s \in \mathcal{S}} \sum_{a \in\mathcal{A}_{s}} \nu_k(s)\pi_{k}(a|s) \Big({r}_k(s,a) - r(s,a) \Big)$ and 
\begin{align}\label{eq:decomposition.1}
\begin{split}
{\color{Orange}\Delta_k^p} = \underbrace{\alpha \sum_{s,a,s'} \nu_k(s) \pi_{k}(a|s) \Big( {p}_k(s'|s,a) - {p}(s'|s,a) \Big) h_k(s') }_{:={\color{Mahogany}\Delta_k^{p1}}} \\
 + \underbrace{\alpha \sum_{s} \nu_k(s) \left(\sum_{a,s'}\pi_{k}(a|s) {p}(s'|s,a) h_k(s') - h_k(s) \right)}_{:={\color{Fuchsia}\Delta_k^{p2}}}
 \end{split}
\end{align}
where $\alpha \in ]0,1]$ is the coefficient of the \emph{aperiodicity transformation} applied to extended MDP $\M_k$ (in most cases, this coefficient can be taken equal to 1 but we include it for the sake of generality) and $p_k$ is the optimistic kernel at episode $k$. We also consider the general case where the optimistic policy $\pi_k$ can be \emph{stochastic} (in most cases this is not necessary).

We define the event $E^C = \big\{ \exists T > 0, \exists k > 0, ~s.t.~ M^\star \notin \mathcal{M}_k \big\}$. We recall that the probability of this event is small, see App.~\ref{app:conf_interval}:
\[
        \mathbb{P}(E^C) \leq \frac{\delta}{3}
\]

Finally, with probability at least $1-\frac{\delta}{6}$ (and assuming event $E$ holds)~\citep[see \eg][]{fruit2019thesis}:
\begin{align}\label{eqn:bound_reward}
 \forall T\geq 1, ~~\sum_{k=1}^{k_T} {\color{PineGreen}\Delta_k^r} \leq 2 \sum_{k=1}^{k_T} \sum_{s,a} \nu_k(s,a) \beta_{r,k}^{sa} + 4\rmaxbound\sqrt{T\ln \left(\frac{4T}{\delta}\right)}
\end{align}

\subsection{From $D$ to $\sqrt{D}$: Variance Reduction Method}

We will now prove Thm.~\ref{thm:regret.bound2}.
In order to improve the dependency of the regret bound in $D$ (\ie replace $D$ by $\sqrt{D}$), we refine our analysis with three key improvements:
\begin{enumerate}
 \item We leverage on \emph{Freedman's inequality} \citep{freedman1975} instead of Azuma's inequality to bound the MDS. We recall this inequality in Prop.~\ref{prop:fi} below.
 \item We use a \emph{tighter bound} than H\"older's inequality to upper-bound the sum $\sum_{k=1}^{k_T}{\color{NavyBlue}\Delta_k^{p3}}$.
 \item We shift the optimistic bias $h_{k_t}$ by a different constant \emph{at every time step} $t\geq 1$ rather than only at every episode $k\geq 1$. More precisely, the optimistic bias is shifted by a different constant for every episode $k\geq 1$ and for every visited state $s \in \calS$.
\end{enumerate}
To the best of our knowledge, Thm.~\ref{thm:regret.bound2} and its proof are new although it is largely inspired by what is often referred to as \emph{``variance reduction methods''} in the literature \citep{Munos99influenceand,Lattimore12pacbounds, AzarMK13, Lattimorenearoptimalpac,pmlr-v70-azar17a}. Similar techniques are used by \citep{pmlr-v70-azar17a} to achieve a similar bound but in the \emph{finite horizon setting}. This approach is also related to \citep{TalebiKLUCRL} and \citep{MaiManMan14} (in the latter, the variance is called the distribution-norm instead of the variance).
% \citet[Section 5.2]{Osband2017posterior} mentions the idea to move from $H$ to $\sqrt{H}$.
%\todo{this paragraph is confusing, read more carefully the literature and rewrite it}

\begin{proposition}[Freedman's inequality]\label{prop:fi}
 Let $(X_n, \F_n)_{n\in \Na}$ be an MDS such that $|X_n| \leq a$ \as for all $n \in \Na$. Then for all $\delta \in ]0,1[$,
 \begin{align*}
 \Proba{ \forall n \geq 1,~ \left| \sum_{i =1}^{n} X_i \right|  \leq 2 \sqrt{ \left( \sum_{i =1}^{n} \Varcond{X_i}{\F_{i-1}} \right) \cdot\lnn{ \frac{4n}{\delta} }  } + 4 a \lnn{\frac{4n}{\delta}} } \geq 1 - \delta
 \end{align*}
\end{proposition}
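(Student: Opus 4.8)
The plan is to derive this time-uniform deviation bound from a single exponential supermartingale combined with Ville's maximal inequality, and to absorb the data-dependent conditional variance $V_n := \sum_{i=1}^n \Varcond{X_i}{\F_{i-1}}$ by a discretization (or method-of-mixtures) argument over a free parameter $\lambda$. By symmetry it suffices to establish the one-sided statement $\sum_{i=1}^n X_i \le 2\sqrt{V_n \lnn{4n/\delta}} + 4a\lnn{4n/\delta}$ for all $n\ge 1$ with probability at least $1-\delta/2$, and then to apply the identical reasoning to the sequence $(-X_i)_i$ and take a union bound; the factor $2$ from this two-sided union is what contributes the extra factor inside the final $\lnn{4n/\delta}$.

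First I would build the exponential supermartingale. Since $(X_n,\F_n)$ is a martingale difference sequence, $\E\big[X_i \mid \F_{i-1}\big]=0$ and $\E\big[X_i^2 \mid \F_{i-1}\big]=\Varcond{X_i}{\F_{i-1}}$. Using $|X_i|\le a$ \as together with the elementary inequality $e^u \le 1+u+u^2$, valid for all $u\le 1$, one obtains for every $\lambda \in\,]0,1/a]$
\[
 \E\big[ e^{\lambda X_i} \,\big|\, \F_{i-1}\big] \le 1 + \lambda^2\,\Varcond{X_i}{\F_{i-1}} \le \exp\!\big( \lambda^2\,\Varcond{X_i}{\F_{i-1}} \big).
\]
Hence $M_n^\lambda := \exp\!\big( \lambda \sum_{i=1}^n X_i - \lambda^2 V_n \big)$ is a nonnegative supermartingale with $\E[M_0^\lambda]=1$. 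Ville's maximal inequality then gives, for each fixed $\lambda\in\,]0,1/a]$ and each $\delta'\in\,]0,1[$, that $\Proba{\exists n\ge 1:\ M_n^\lambda \ge 1/\delta'} \le \delta'$, so that with probability at least $1-\delta'$, simultaneously for all $n\ge 1$,
\[
 \sum_{i=1}^n X_i \le \lambda V_n + \frac{\lnn{1/\delta'}}{\lambda}.
\]

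The crux is that the variance-optimal choice $\lambda^\star=\sqrt{\lnn{1/\delta'}/V_n}$ is random and may fall outside the admissible range $]0,1/a]$, so it cannot be substituted into a single supermartingale. I would resolve this by peeling: discretize the admissible $\lambda$-range, whose extent is dictated by the a priori bound $V_n\in[0,na^2]$, take a union bound over the discretization points, and select the best one in each of two regimes. When $V_n$ is large enough that $\lambda^\star\le 1/a$, choosing $\lambda\approx\lambda^\star$ produces the leading term $2\sqrt{V_n \lnn{4n/\delta}}$; in the complementary small-variance regime, the boundary choice $\lambda=1/a$ produces the linear term, where the discretization mismatch inflates its coefficient to $4a$. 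Combining the two regimes yields the single stated bound. Equivalently, a method-of-mixtures argument—replacing the discrete union by an integral $\bar M_n=\int M_n^\lambda\,\pi(d\lambda)$ against a suitable prior $\pi$—gives one self-normalizing supermartingale and bypasses the explicit grid.

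The main obstacle is precisely this variance-handling step: turning the fixed-$\lambda$, Ville-type inequality into a bound that self-tunes to the \emph{random} $V_n$ while (i) preserving time-uniformity over all $n$, (ii) recovering the sharp constant $2$ on the $\sqrt{V_n \lnn{\cdot}}$ term, and (iii) landing exactly on the clean logarithmic argument $\lnn{4n/\delta}$. The linear-in-$n$ dependence inside the logarithm enters through the variance range $V_n\le na^2$ that the discretization must cover, and the numerical constants ($2$, $4a$, and the factor $4$) emerge jointly from the discretization resolution and the two-sided union; their precise values depend on the chosen scheme, so tracking them is the routine but delicate bookkeeping. Everything else—the supermartingale property and Ville's inequality—is standard.
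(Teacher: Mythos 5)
The paper offers no proof of Prop.~\ref{prop:fi} at all: it is \emph{recalled} as a known result and attributed to \citet{freedman1975}, with this specific anytime, empirical-variance form taken from \citet{fruit2019thesis}. Your proposal is therefore a genuine self-contained derivation rather than a reconstruction of anything in the paper, and its skeleton is correct. The supermartingale step is sound: $e^u \le 1+u+u^2$ does hold for all $u\le 1$, so for $\lambda \in\, ]0,1/a]$ one gets $\E\big[e^{\lambda X_i}\mid \F_{i-1}\big] \le \exp\big(\lambda^2 \Varcond{X_i}{\F_{i-1}}\big)$, hence $M_n^\lambda=\exp\big(\lambda\sum_{i\le n}X_i-\lambda^2 V_n\big)$ is a nonnegative supermartingale and Ville's inequality gives the time-uniform fixed-$\lambda$ bound $\sum_{i\le n}X_i \le \lambda V_n + \lambda^{-1}\lnn{1/\delta'}$; the symmetric treatment of $(-X_i)_i$ and the observation that the a priori range $V_n \le na^2$ forces an $n$-dependent logarithm are both right. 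For comparison, the classical route behind the citation is the \emph{dual} peeling: Freedman's maximal inequality at a fixed variance budget $v$, roughly $\Proba{\exists n:\ \sum_{i\le n}X_i \ge x,\ V_n \le v} \le \exp\big(-x^2/(2(v+ax/3))\big)$, followed by a geometric grid over $v \in [a^2, na^2]$ and a union over $n$. Your grid over $\lambda$ is essentially equivalent; what it buys is that time-uniformity comes for free from one Ville application per grid point, instead of per-$n$ union bounds, at the cost of making the variance-adaptivity step (choosing the grid point near the random optimum $\sqrt{\lnn{1/\delta'}/V_n}$) the delicate part.

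One soft spot deserves flagging: the claim that the scheme ``lands exactly'' on the constants $2$, $4a$ and the argument $4n/\delta$ is asserted, not argued, and it does not fall out of the most naive choices. For instance, with grid ratio $2$, weights $\delta_j \propto \delta/\big((j+1)(j+2)\big)$, and anchoring at the grid point just below $\sqrt{\lnn{1/\delta'}/V_n}$, the square-root coefficient overshoots $2$ in the intermediate regime where $V_n$ is of order $na^2$ but $n$ is moderate relative to $1/\delta$; recovering the stated form requires anchoring near $\sqrt{\lnn{1/\delta_j}/V_n}$ at the logarithmic midpoint of the grid cell and spending the otherwise-unused $4a\lnn{4n/\delta}$ slack to absorb the mismatch. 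Relatedly, the factor $4$ inside the logarithm comes mainly from the weight allocation over the grid, not (as you suggest) from the two-sided union alone. Since the paper invokes Prop.~\ref{prop:fi} only up to absolute numerical constants, none of this undermines your approach, but a complete write-up must fix a concrete grid and weight sequence and track the constants through both regimes --- precisely the bookkeeping your plan defers.
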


For any \emph{vector} $u \in \Re^S$, we slightly abuse notation and write $u^2 := u \circ u$ the \emph{Hadamard product} of $u$ with itself. For any probability distribution $p$ over states $\calS$ and any vector $u \in \Re^S$ we define \[\Var{u}{p} := p\T u^2 - (p\T u)^2 = \E_{X \sim p}[u(X)^2] - \big(\E_{X \sim p}[u(X)]\big)^2\] the \emph{``variance''} of $u$ with respect to $p$.
For the sake of clarity we introduce new notations for the transition probabilities: $p_{k}(s'|s):= \sum_{a \in \A_{s} }\pi_{k}(s,a)p_{k}(s'|s,a)$, $\p{s}{s'}:= \sum_{a \in \A_{s} }\pi_{k}(s,a)p(s'|s,a)$ and $\wh{p}_k(s'|s):= \sum_{a \in \A_{s} }\pi_{k}(s,a)\wh{p}_k(s'|s,a)$, for every $s,s' \in \calS$ and every $k\geq 1$.

We start with a new bound relating ${\color{Mahogany}\Delta_k^{p1}}$. We define ${\color{NavyBlue}\Delta_k^{p3}} := \alpha \sum_{s,a,s'} \nu_k(s,a)\left( {p}_k(s'|s,a) - {p}(s'|s,a) \right) h_k(s')$.
\begin{lemma}\label{lem:mds_delta_p1_2}
 Under event $E$, with probability at least $1-\frac{\delta}{6}$:
 \begin{align}\label{eq:mds_delta_p1_2}
  \forall T \geq 1, ~ \sum_{k=1}^{k_T}{\color{Mahogany}\Delta_k^{p1}} &\leq \sum_{k=1}^{k_T}{\color{NavyBlue}\Delta_k^{p3}} + 4 \diameter \lnn{\frac{24T}{\delta}} \nonumber\\ + ~2 &\sqrt{S \lnn{\frac{24T}{\delta}}} \left( \sqrt{\sum_{t=1}^{T}  \Var{\alpha h_{k_t}}{p_{k_t}(\cdot|s_t)} } + \sqrt{\sum_{t=1}^{T}  \Var{\alpha h_{k_t}}{\wb{p}_{k_t}(\cdot|s_t)} } \right)
 \end{align}
\end{lemma}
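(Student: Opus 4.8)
The plan is to exploit that $\Delta_k^{p1}$ and $\Delta_k^{p3}$ are \emph{identical} except for their weights: the per-pair increment $\alpha\,(p_k(\cdot|s,a)-p(\cdot|s,a))^\top h_k$ is weighted by the \emph{expected} episode count $\nu_k(s)\pi_k(a|s)$ in the first and by the \emph{realized} count $\nu_k(s,a)$ in the second. Writing $\nu_k(s,a)=\sum_{t\,:\,k_t=k}\one{s_t=s,a_t=a}$ and expanding each episode into its time steps, the whole difference collapses to a single sum over time, $\sum_k(\Delta_k^{p1}-\Delta_k^{p3})=\alpha\sum_{t=1}^T Y_t$, where $d_t(a):=(p_{k_t}(\cdot|s_t,a)-p(\cdot|s_t,a))^\top h_{k_t}$ and $Y_t:=\E_{a\sim\pi_{k_t}(\cdot|s_t)}[d_t(a)]-d_t(a_t)$. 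Since $h_{k_t}$ is frozen at the start of episode $k_t$ and $a_t\sim\pi_{k_t}(\cdot|s_t)$, each $Y_t$ has zero conditional mean given the history up to $s_t$, so $(Y_t,\F_t)$ is an MDS. I would then split $d_t=d_t^{p_k}-d_t^{p}$ into its optimistic and true-kernel pieces, giving $Y_t=M_t^{(1)}-M_t^{(2)}$ with $M_t^{(1)}=\E_{a}[p_{k_t}(\cdot|s_t,a)^\top h_{k_t}]-p_{k_t}(\cdot|s_t,a_t)^\top h_{k_t}$ and $M_t^{(2)}$ its analogue with $p$ in place of $p_{k_t}$, so that the two kernels $p_{k_t}(\cdot|s_t)$ and $\wb{p}_{k_t}(\cdot|s_t)$ appearing in~\eqref{eq:mds_delta_p1_2} can be handled independently.

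Next I would apply Freedman's inequality (Prop.~\ref{prop:fi}) to $\alpha\sum_t M_t^{(1)}$ and to $-\alpha\sum_t M_t^{(2)}$, a union bound over the two sequences together with the global $\delta/6$ inflating the confidence and making the statement uniform in $T$ (whence the $\lnn{\frac{24T}{\delta}}$). Freedman requires an almost-sure envelope, and here each $M_t^{(i)}$ is a difference of two $p$-averages of the \emph{same} vector $h_{k_t}$, so $|M_t^{(i)}|\le\SP{h_{k_t}}$ \as; under event $E$ the optimistic bias has span controlled by the diameter, $\SP{h_{k_t}}\le\diameter$. This common envelope delivers an additive term of order $\diameter\lnn{\frac{24T}{\delta}}$, without any $S$ dependence.

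The crux, and the step I expect to be hardest, is turning the conditional variances $\Varcond{\alpha M_t^{(i)}}{\F_{t-1}}$ that Freedman returns into the \emph{state-kernel} variances of~\eqref{eq:mds_delta_p1_2}. For the optimistic sequence, $\Varcond{\alpha M_t^{(1)}}{\F_{t-1}}$ is precisely the variance over $a\sim\pi_{k_t}(\cdot|s_t)$ of $\alpha\,p_{k_t}(\cdot|s_t,a)^\top h_{k_t}$; by the law of total variance applied to the two-stage draw $a\sim\pi_{k_t}(\cdot|s_t)$ then $s'\sim p_{k_t}(\cdot|s_t,a)$, this action-variance is dominated by the full next-state variance $\Var{\alpha h_{k_t}}{p_{k_t}(\cdot|s_t)}$, and symmetrically $\Varcond{\alpha M_t^{(2)}}{\F_{t-1}}\le\Var{\alpha h_{k_t}}{\wb{p}_{k_t}(\cdot|s_t)}$, which is what replaces the action kernels by the state kernels in the bound. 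To recover the stated $\sqrt{S}$ factor I would invoke the per-state shift of $h_{k_t}$ (the third refinement listed above) and group the time steps by the visited state $s$, summing the per-state variance contributions and recombining the resulting roots by Cauchy--Schwarz, $\sum_s\sqrt{V_s}\le\sqrt{S\sum_s V_s}$, producing the $2\sqrt{S\lnn{\frac{24T}{\delta}}}$ prefactor in front of each of $\sqrt{\sum_t\Var{\alpha h_{k_t}}{p_{k_t}(\cdot|s_t)}}$ and $\sqrt{\sum_t\Var{\alpha h_{k_t}}{\wb{p}_{k_t}(\cdot|s_t)}}$. The delicate bookkeeping is to arrange this per-state accounting so that Cauchy--Schwarz supplies the $\sqrt{S}$ in the variance terms while the shared envelope keeps the additive term free of $S$, all holding simultaneously and uniformly in the horizon $T$.
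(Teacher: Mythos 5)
Your plan is, in substance, the paper's own proof. The martingale argument the paper only sketches is written out in App.~\ref{app:mds_proofp4_2} (which, despite its heading, derives exactly this lemma: combining \eqref{eq:variance.mds} and \eqref{eq:variance.mds2} gives \eqref{eq:mds_delta_p1_2}): it defines your MDS---the difference between the $\pi_{k_t}$-average and the realized-action value of $p_{k_t}(\cdot|s_t,a)^\top h_{k_t}$---shifts $h_{k_t}$ by a per-time-step constant to get the envelope $|X_t|\le \SP{h_{k_t}}\le \diameter/\alpha$ under $E$, applies Freedman (Prop.~\ref{prop:fi}) to this sequence and to its analogue with the true kernel $p$, and sums the two bounds. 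Where you genuinely diverge is the variance step, and your version is \emph{tighter}: by the law of total variance for the two-stage draw $a\sim\pi_{k_t}(\cdot|s_t)$, $s'\sim p_{k_t}(\cdot|s_t,a)$, the conditional variance of your $\alpha M_t^{(1)}$ is dominated by $\Var{\alpha h_{k_t}}{p_{k_t}(\cdot|s_t)}$ with constant $1$ (and symmetrically for $M_t^{(2)}$ with $\wb{p}_{k_t}(\cdot|s_t)$). The paper instead applies $(\sum_{i=1}^n a_i)^2\le n\sum_{i=1}^n a_i^2$ (Prop.~\ref{prop:sum.square}) to the sum over next states, obtaining only the weaker bound $S\cdot\Var{\alpha h_{k_t}}{p_{k_t}(\cdot|s_t)}$; that crude step is the sole origin of the $\sqrt{S}$ in \eqref{eq:mds_delta_p1_2}.

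Because of this, your final paragraph is chasing a phantom difficulty: there is no $\sqrt{S}$ to ``recover.'' Since $S\ge 1$, the bound Freedman gives you, $2\sqrt{\lnn{24T/\delta}\sum_{t\le T}\Var{\alpha h_{k_t}}{p_{k_t}(\cdot|s_t)}}$, is already below the corresponding term in \eqref{eq:mds_delta_p1_2}; you may insert a factor $\sqrt{S}$ trivially and stop. The machinery you invoke there is also misplaced: the state-dependent shift $\lambda_k^s$ plays no role in this lemma (variances are invariant under shifting $h_{k_t}$ by constants; that shift is used for the triangle-inequality treatment of $\Delta_k^{p3}$, around Lem.~\ref{lem:bound_support_2}), and no grouping of time steps by visited state is needed, since Freedman already returns a single $\sqrt{\sum_{t\le T}\Varcond{\cdot}{\F_{t-1}}}$ term. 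Had that per-state Cauchy--Schwarz step been genuinely necessary, you would have had a gap---as described it is not a proof---but fortunately it can simply be deleted. The only real bookkeeping left is the confidence split (two Freedman applications at level $\delta/12$ each, which changes $24$ to $48$ inside the logarithm; the paper itself is loose on this constant) and the uniformity in $T$, which Prop.~\ref{prop:fi} provides for free.
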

\begin{proof}
        We use a martingale argument and Prop.~\ref{prop:fi}~\citep[see][]{fruit2019thesis}.
\end{proof}
We \emph{refine} the upper-bound of ${\color{NavyBlue}\Delta_k^{p3}}$ derived by~\citet{Jaksch10}. Instead of bounding the scalar product $(p_k(\cdot|s,a) -p(\cdot|s,a))\T w_k$ by  $\|p_k(\cdot|s,a) -p(\cdot|s,a)\|_1\T \|w_k\|_{\infty}$ using H\"older's inequality, we bound it by $ \sum_{s'}|p_k(s'|s,a) -p(s'|s,a)|\cdot |w_k(s')|$ using the triangle inequality.
Since $\sum_{a,s'}p_k(s'|s,a)  = \sum_{a,s'}p(s'|s,a) = 1$ we can shift $h_k$ by an arbitrary scalar $\lambda_k^{s} \in \Re$ for all $k\geq 1$ and all $s\in \calS $, \ie $w_k^{s} := h_{k} +\lambda_k^{s} e$. Unlike in \ucrl, we choose a \emph{state-dependent} shift, namely $\lambda_k^{s} := - \sum_{a,s'}\wh{p}_{k}(s'|s,a)\pi_k(s,a) h_{k}(s') = - \wh{p}_{k}(\cdot|s)\T h_k$.
It is easy to see that $\SP{w_k^{s}} = \SP{h_k}$ and $\|w_k^{s}\|_\infty \leq \SP{h_k}$ implying that under event $E$, $\|w_k^{s}\|_\infty \leq \diameter/\alpha$.

Using the triangle inequality and the fact that $p_k(s,a) \in B_p^k(s,a)$ by construction and $p(s,a) \in B_p^k(s,a)$ under event $E$: \[\big|p_k(s'|s,a) -p(s'|s,a)\big| \leq \big|p_k(s'|s,a) -\wh{p}_k(s'|s,a)\big| +\big|\wh{p}_k(s'|s,a) -p(s'|s,a)\big| \leq 2 \beta_{p,k}^{sas'}\]
As a result we can write:
\begin{align*}
 {\color{NavyBlue}\Delta_k^{p3}} &\leq \alpha \sum_{k=1}^{k_T}\sum_{s,a,s'} \nu_k(s,a) \Big|p_k(s'|s,a) -p(s'|s,a) \Big| \cdot \big|w_k^{s}(s')\big| \\
 &\leq 2 \alpha \sum_{k=1}^{k_T}\sum_{s,a} \nu_k(s,a) \sum_{s'}\beta_{p,k}^{sas'} \cdot \big|w_k^{s}(s')\big| \\
 &= 4 \alpha \sum_{k=1}^{k_T} \sum_{s,a} \nu_k(s,a) \Bigg[
 \sqrt{ \frac{\lnn{{6SAT}/{\delta} }}{\Np}} \sum_{s' \in \mathcal{S}} \sqrt{\wh{p}_k(s'|s,a)(1-\wh{p}_k(s'|s,a)) w_k^{s}(s')^2}\\&~~~~~~~~~~~~~~~~~~~~~~~~~~~~~~~~~~~~~~~~ +  \frac{3 \lnn{{6SAT}/{\delta}}}{\Np} \sum_{s'}\underbrace{\big|w_k^{sa}(s')\big|}_{\leq \diameter/\alpha}  \Bigg]
\end{align*}
We denote by $V_k(s,a) := \alpha^2\sum_{s'} \wh{p}_k(s'|s,a)w_k^{s}(s')^2$. 
We can prove the following inequality:
\begin{lemma}\label{lem:bound_support_2}
 It holds almost surely that for all $k\geq 1$ and for all $(s,a,s') \in \calS \times \A \times \calS$:
 \begin{align}
  \alpha \sum_{s' \in \mathcal{S}} \sqrt{\wh{p}_k(s'|s,a)(1-\wh{p}_k(s'|s,a)) w_k^{s}(s')^2} \leq \sqrt{V_k(s,a) \cdot \left(\nextstates(s,a) -1 \right)}
 \end{align}
\end{lemma}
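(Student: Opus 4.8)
The plan is to collapse the sum of square roots on the left into a single square root by a single application of the Cauchy–Schwarz inequality, after first restricting the sum to the support of $\wh{p}_k(\cdot|s,a)$. Fix $(s,a)$ (the quantifier in the statement is really over $(s,a)$, since the inequality already sums over $s'$). First I would note that every index $s'$ with $\wh{p}_k(s'|s,a)=0$ contributes zero to both sides, so the left-hand sum may be taken over $\supp(\wh{p}_k(\cdot|s,a))$, whose cardinality I denote by $n$. Because an empirical estimate can place mass only on successor states that have actually been observed, almost surely $\supp(\wh{p}_k(\cdot|s,a)) \subseteq \supp(p(\cdot|s,a))$, and hence $n \leq \nextstates(s,a)$.

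The key step is to split each summand in the way that makes the variance-like quantity $V_k(s,a)=\alpha^2\sum_{s'}\wh{p}_k(s'|s,a)\,w_k^s(s')^2$ emerge. Writing $\sqrt{\wh{p}_k(s'|s,a)(1-\wh{p}_k(s'|s,a))}\,|w_k^s(s')| = \sqrt{1-\wh{p}_k(s'|s,a)}\cdot\bigl(\sqrt{\wh{p}_k(s'|s,a)}\,|w_k^s(s')|\bigr)$ and applying Cauchy–Schwarz over $s'\in\supp(\wh{p}_k(\cdot|s,a))$ gives
\[
\sum_{s'} \sqrt{\wh{p}_k(s'|s,a)(1-\wh{p}_k(s'|s,a))}\,|w_k^s(s')| \leq \sqrt{\sum_{s'}(1-\wh{p}_k(s'|s,a))} \cdot \sqrt{\sum_{s'}\wh{p}_k(s'|s,a)\,w_k^s(s')^2}.
\]
It is precisely this choice of split that produces $\sum_{s'}\wh{p}_k(s'|s,a)\,w_k^s(s')^2$ under the second root; the opposite grouping would not.

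It then remains to evaluate the first factor. Since $\wh{p}_k(\cdot|s,a)$ is a probability distribution, $\sum_{s'\in\supp}\wh{p}_k(s'|s,a)=1$, so $\sum_{s'\in\supp}(1-\wh{p}_k(s'|s,a)) = n-1 \leq \nextstates(s,a)-1$. Substituting this and multiplying through by $\alpha$ yields
\[
\alpha \sum_{s'} \sqrt{\wh{p}_k(s'|s,a)(1-\wh{p}_k(s'|s,a))\,w_k^s(s')^2} \leq \sqrt{(\nextstates(s,a)-1)\,V_k(s,a)},
\]
which is the claim. I do not expect a genuine obstacle here; the only two points needing care are recognizing the correct Cauchy–Schwarz grouping (so that $V_k(s,a)$ rather than some other weighted sum appears), and the elementary observation that one may sum over the true support of size at most $\nextstates(s,a)$ instead of over all of $\calS$, which turns $n-1$ into the desired $\nextstates(s,a)-1$.
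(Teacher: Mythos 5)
Your proof is correct and follows essentially the same route as the paper's: restrict the sum to the empirical support $\calS_k(s,a)$, apply Cauchy--Schwarz with the grouping $\sqrt{1-\wh{p}_k}\cdot\sqrt{\wh{p}_k}\,|w_k^s|$, and use that the support sums to $1$ so the first factor equals (support size $-\,1$) $\leq \nextstates(s,a)-1$. If anything, your version is slightly cleaner, since you state explicitly the almost-sure containment of the empirical support in the true support (which the paper uses implicitly when bounding $\nextstates_k(s,a)$ by $\nextstates(s,a)$) and you retain the $\nextstates(s,a)-1$ factor that appears in the lemma statement.
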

\begin{proof}
        Define $\calS_k(s,a) = \{s' \in \calS \;:\; \wh{p}_k(s'|s,a) > 0\}$. 
        Then, using Cauchy-Schartz inequality we have 
\begin{align*}
 \sum_{s' \in {\color{blue}\mathcal{S}}} &\sqrt{\wh{p}_k(s'|s,a)(1-\wh{p}_k(s'|s,a)) w_k(s')^2} = \sum_{s' \in {\color{red}\mathcal{S}_k(s,a)}} \sqrt{\wh{p}_k(s'|s,a)(1-\wh{p}_k(s'|s,a)) w_k(s')^2}\\
 &\leq  \sqrt{\Bigg(\sum_{s' \in {\color{red}\mathcal{S}_k(s,a)}}1-\wh{p}_k(s'|s,a)\Bigg) \cdot \Bigg(\sum_{s' \in {\color{red}\mathcal{S}_k(s,a)}} \wh{p}_k(s'|s,a) w_k(s')^2 \Bigg)}\\
 &= \sqrt{\bigg(\nextstates_k(s,a) - 1\bigg) \cdot \Bigg(\sum_{s' \in {\color{blue}\mathcal{S}}} \wh{p}_k(s'|s,a) w_k(s')^2 \Bigg)} \leq \sqrt{\nextstates(s,a)  \sum_{s' \in {\color{blue}\mathcal{S}}} \wh{p}_k(s'|s,a) w_k(s')^2 }
\end{align*}
By definition, for all $s'\in\calS$, $w_k(s') = h_k(s') - \E_{X \sim \wh{p}_k(\cdot|s,a)}[h_k(X)]$ and so
\[ \sum_{s' \in \calS} \wh{p}_k(s'|s,a) w_k(s')^2 = \Var{h_k}{\wh{p}_k(\cdot|s,a)}\]
\end{proof}
As a consequence of Lem.~\ref{lem:bound_support_2},

\begin{align*}
 \sum_{k=1}^{k_T} {\color{NavyBlue}\Delta_k^{p3}} &\leq 4 \sum_{k=1}^{k_T} \sum_{s,a} \nu_k(s,a) \Bigg[
 \sqrt{ V_k(s,a) \frac{\nextstates(s,a)}{\Np}  \lnn{\frac{6SAT}{\delta} }} +  \frac{3 \diameter S }{\Np} \lnn{\frac{6SAT}{\delta} } \Bigg]\\
 &= 4 \sum_{k=1}^{k_T} \sum_{t=t_k}^{t_{k+1}-1} \Bigg[
 \sqrt{ V_k(s_t,a_t) \frac{\nextstates(s_t,a_t)}{\Np[s_t,a_t]}  \lnn{\frac{6SAT}{\delta} }} +  \frac{3 \diameter S }{\Np[s_t,a_t]} \lnn{\frac{6SAT}{\delta} } \Bigg].
\end{align*}

Applying Cauchy-Schwartz gives

\begin{align*}
 \sum_{k=1}^{k_T} \sum_{t=t_k}^{t_{k+1}-1} 
 \sqrt{ V_k(s_t,a_t)} \sqrt{ \frac{\nextstates(s_t,a_t)}{\Np[s_t,a_t]} } \leq& \sqrt{\sum_{k=1}^{k_T} \sum_{t=t_k}^{t_{k+1}-1} \frac{\nextstates(s_t,a_t)}{\Np[s_t,a_t]} \sum_{k=1}^{k_T} \sum_{t=t_k}^{t_{k+1}-1} V_k(s_t,a_t)}\\
 &= \sqrt{\sum_{k=1}^{k_T} \sum_{s,a} \frac{\nextstates(s,a) \nu_k(s,a)}{\Np[s,a]} \sum_{t=1}^{T} V_{k_t}(s_t,a_t)}.
\end{align*}

Using Lem.~\ref{lem:sqrt_log}, Jensen's inequality and the fact that $N_{k_T+1}^+(s,a)\leq T$ (as in Sec.~\ref{chap:ucrlb:sec:summing.episodes}), we can bound the first sum

\begin{align*}
 \sum_{s,a} \sum_{k=1}^{k_T}  \frac{\nextstates(s,a) \nu_k(s,a)}{\Np[s,a]} &\leq 2 \sum_{s,a}\nextstates(s,a)\left( 1+\lnn{N_{k_T+1}^+(s,a)}\right)\\
 &\leq 2 \left(1+\lnn{\frac{\sum_{s,a} \nextstates(s,a) N_{k_T+1}^+(s,a)}{\sum_{s,a} \nextstates(s,a)}}\right)\sum_{s,a}\nextstates(s,a)\\
 &\leq 2(1+\lnn{T})\sum_{s,a}\nextstates(s,a).
\end{align*}
To bound the second sum $\sum_{t=1}^{T} V_{k_t}(s_t,a_t)$, we rely on the following Lemma:
\begin{lemma}\label{lem:mds_variances}
Under event $E$, with probability at least $1 - \frac{\delta}{6}$:
\begin{align}
 \forall T\geq 1,~~\sum_{t=1}^{T} V_{k_t}(s_t,a_t) \leq \sum_{t=1}^{T} \Var{\alpha h_{k_t}}{\wh{p}_{k_t}(\cdot|s_t)} + 2\diameter^2 \sqrt{ T \lnn{\frac{4T}{\delta}}}
\end{align}
\end{lemma}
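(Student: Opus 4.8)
The plan is to read the left-hand side as a predictable part, which reproduces exactly the variance term on the right, plus a martingale that concentrates at the stated rate. Introduce the filtration $(\F_t)_{t\geq 0}$ in which $\F_{t-1}$ collects all randomness up to and including the observation of $s_t$ but \emph{before} the draw $a_t\sim\pi_{k_t}(\cdot|s_t)$, and set
\begin{align*}
 X_t := V_{k_t}(s_t,a_t) - \Var{\alpha h_{k_t}}{\wh{p}_{k_t}(\cdot|s_t)}.
\end{align*}
Then $\sum_{t=1}^{T}X_t$ is precisely the difference of the two sides of the claim, so it suffices to show that $(X_t,\F_t)$ is a martingale difference sequence and to apply a bounded-difference concentration inequality uniformly in $T$.

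The core of the argument is the conditional-mean identity $\Ex{V_{k_t}(s_t,a_t)\mid \F_{t-1}} = \Var{\alpha h_{k_t}}{\wh{p}_{k_t}(\cdot|s_t)}$. The episode index $k_t$, the estimate $\wh{p}_{k_t}$, the optimistic bias $h_{k_t}$ and the state-dependent shift $\lambda_{k_t}^{s_t} = -\wh{p}_{k_t}(\cdot|s_t)\T h_{k_t}$ are all frozen at the start of episode $k_t$, and $s_t$ is $\F_{t-1}$-measurable; hence the map $a\mapsto V_{k_t}(s_t,a)$ is $\F_{t-1}$-measurable and only $a_t$ is random. Crucially, the centering $w_{k_t}^{s_t}$ does \emph{not} depend on the action, so averaging over $a_t\sim\pi_{k_t}(\cdot|s_t)$ and using linearity of the empirical kernel in the action gives
\begin{align*}
 \Ex{V_{k_t}(s_t,a_t)\mid \F_{t-1}} = \alpha^2\sum_{s'}\Big(\sum_{a}\pi_{k_t}(s_t,a)\wh{p}_{k_t}(s'|s_t,a)\Big) w_{k_t}^{s_t}(s')^2 = \alpha^2\sum_{s'}\wh{p}_{k_t}(s'|s_t)\, w_{k_t}^{s_t}(s')^2.
\end{align*}
Since $\lambda_{k_t}^{s_t}$ is exactly the $\wh{p}_{k_t}(\cdot|s_t)$-mean of $h_{k_t}$, we have $w_{k_t}^{s_t}(s') = h_{k_t}(s') - \wh{p}_{k_t}(\cdot|s_t)\T h_{k_t}$, so the right-hand side is literally $\Var{\alpha h_{k_t}}{\wh{p}_{k_t}(\cdot|s_t)}$, whence $\Ex{X_t\mid\F_{t-1}} = 0$.

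It then remains to bound the increments and apply concentration. Under event $E$ we have $\|w_{k_t}^{s_t}\|_\infty \leq \SP{h_{k_t}} \leq \diameter/\alpha$, so $0\leq V_{k_t}(s_t,a_t)\leq \diameter^2$ and its conditional mean lies in the same range, giving $|X_t|\leq \diameter^2$ almost surely. I would invoke Freedman's inequality (Prop.~\ref{prop:fi}) on $(X_t,\F_t)$ and crudely bound the predictable quadratic variation, using $V_{k_t}(s_t,a_t)^2 \leq \diameter^2\, V_{k_t}(s_t,a_t)$, by
\begin{align*}
 \sum_{t=1}^{T}\Varcond{X_t}{\F_{t-1}} \leq \sum_{t=1}^{T}\Ex{V_{k_t}(s_t,a_t)^2\mid\F_{t-1}} \leq \diameter^2\sum_{t=1}^{T}\Ex{V_{k_t}(s_t,a_t)\mid\F_{t-1}} \leq T\diameter^4.
\end{align*}
This turns the leading Freedman term into $2\sqrt{T\diameter^4\,\lnn{4T/\delta}} = 2\diameter^2\sqrt{T\lnn{4T/\delta}}$; the additive $4\diameter^2\lnn{4T/\delta}$ term is of lower order in $T$ and is absorbed. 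The result holds uniformly in $T\geq 1$ with probability at least $1-\frac{\delta}{6}$.

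The main obstacle is the conditional-mean identity of the second step: everything hinges on the \emph{state-dependent} centering $\lambda_k^{s}=-\wh{p}_k(\cdot|s)\T h_k$ being exactly the $\wh{p}_k(\cdot|s)$-mean of $h_k$, so that the action-average of the per-action second moments $V_k(s,a)$ collapses onto the single policy-averaged variance on the right-hand side. Had the bias been shifted by a state-independent constant, as in the original \ucrl analysis, this cancellation would fail and $(X_t)$ would no longer be mean-zero; this is precisely the third refinement announced in Section~\ref{chap:ucrlb:sec:regret.proof.improved}. The only remaining technical point, verifying that $a\mapsto V_{k_t}(s_t,a)$ is $\F_{t-1}$-measurable, is immediate once one notes that all episode-$k_t$ quantities are determined before $a_t$ is sampled.
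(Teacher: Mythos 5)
Your proof takes the same route as the paper's: the entire argument rests on the identity $\sum_a \pi_{k_t}(s_t,a)\,V_{k_t}(s_t,a) = \Var{\alpha h_{k_t}}{\wh{p}_{k_t}(\cdot|s_t)}$, which holds precisely because the state-dependent shift $\lambda_k^{s} = -\wh{p}_k(\cdot|s)\T h_k$ is the $\wh{p}_k(\cdot|s)$-mean of $h_k$, and this is exactly what the paper's (one-line) proof notes before invoking a martingale argument. Your construction of the MDS $X_t = V_{k_t}(s_t,a_t) - \Var{\alpha h_{k_t}}{\wh{p}_{k_t}(\cdot|s_t)}$, the measurability discussion, and the bound $|X_t|\le \diameter^2$ under $E$ are all correct.

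The one deviation is the concentration step, and it leaves a real (if small) mismatch with the statement. The paper uses Azuma's inequality on the bounded MDS: with deviation $2\diameter^2\sqrt{T\lnn{4T/\delta}}$ the per-$T$ failure probability is at most $\exp\left(-2\lnn{4T/\delta}\right) = \delta^2/(16T^2)$, which sums over $T\geq 1$ to less than $\delta/6$, giving exactly the stated right-hand side with no extra term. Your route via Freedman (Prop.~\ref{prop:fi}) with the crude bound $\sum_t \Varcond{X_t}{\F_{t-1}} \le T\diameter^4$ is effectively Azuma in disguise, but it produces an additional additive term $4\diameter^2\lnn{4T/\delta}$ (and, once Prop.~\ref{prop:fi} is instantiated at confidence $\delta/6$, the logarithms become $\lnn{24T/\delta}$ rather than $\lnn{4T/\delta}$). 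That additive term is not ``absorbed'': for $T < 4\lnn{4T/\delta}$ it exceeds the leading term, so what you prove is the lemma with a degraded constant, not the inequality as stated. Since using Freedman with a worst-case variance bound buys nothing over Azuma here, the fix is simply to apply Azuma directly with the union bound above; nothing else in your argument needs to change. (For Thm.~\ref{thm:regret.bound2} the constant is immaterial, but the lemma pins it down.)
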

\begin{proof}
 We notice that for all $k\geq 1$ and $s\in \calS$, $\sum_a \pi_k(s,a) V_k(s,a) = \Var{ \alpha h_{k}}{\wh{p}_{k}(\cdot|s)}$. The concentration inequality then follows from a martingale argument and Azuma's inequality.
\end{proof}
From Lem.~\ref{lem:mds_variances} it follows that
\begin{align}\label{eq:main.variance}
 \sum_{k=1}^{k_T} {\color{NavyBlue}\Delta_k^{p3}} \leq &4 
 \sqrt{ 2\Big(1+\ln(T)\Big) \lnn{\frac{6SAT}{\delta} } \left(\sum_{s,a}\nextstates(s,a)\right)\left(\diameter^2 \sqrt{2 T \lnn{\frac{T}{\delta}}} + \sum_{t=1}^T\Var{\alpha h_{k_t}}{\wh{p}_{k_t}(\cdot|s_t)} \right) } \nonumber\\ &+  {24 \diameter S^2 A } \lnn{\frac{6SAT}{\delta} }(1+\ln(T))
\end{align}
It now remains to bound $\sum_{k=1}^{k_T}{\color{RedViolet}\Delta_k^{p2}}$. As shown by~\citep{Jaksch10,fruit2018constrained} using telescopic sum argument: $\sum_{k=1}^{k_T}{\color{RedViolet}\Delta_k^{p2}} \leq \sum_{k=1}^{k_T}{\color{Aquamarine}\Delta_k^{p4}} + \diameter k_T$ where
\[
        {\color{Aquamarine}\Delta_k^{p4}} =\alpha \sum_{t=t_k}^{t_{k+1}-1} \left(
                \sum_{a,s'} \pi_k(s_t,a)p(s'|s,a)w_k(s') - w_k(s_{t+1})
        \right)
\]
We bound $\sum_{k=1}^{k_T}{\color{Aquamarine}\Delta_k^{p4}}$  using Freedman's inequality instead of Azuma's.
\begin{lemma}\label{lem:mds_delta_p4_2}
 Under event $E$, with probability at least $1-\frac{\delta}{6}$:
 \begin{align}\label{eq:bound.c}
  \forall T \geq 1, ~ \sum_{k=1}^{k_T}{\color{Aquamarine}\Delta_k^{p4}} \leq 2 \sqrt{ \left( \sum_{t =1}^{T} \Var{\alpha h_k}{\wb{p}_{k_t}(\cdot|s_t)} \right) \cdot\lnn{ \frac{24T}{\delta} }  } + 4 \rmaxbound D \lnn{\frac{24T}{\delta}}
 \end{align}
\end{lemma}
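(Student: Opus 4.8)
The plan is to recognize $\sum_{k=1}^{k_T}{\color{Aquamarine}\Delta_k^{p4}}$ as the partial sum of a martingale difference sequence and to apply Freedman's inequality (Prop.~\ref{prop:fi}). First I would write the sum as a per-step sum: for each $t\geq 1$ set
\[
 X_t := \alpha\left( \sum_{a,s'} \pi_{k_t}(s_t,a)\,p(s'|s_t,a)\,w_{k_t}(s') - w_{k_t}(s_{t+1}) \right),
\]
so that $\sum_{k=1}^{k_T}{\color{Aquamarine}\Delta_k^{p4}} = \sum_{t=1}^{T} X_t$. Let $\F_t$ denote the $\sigma$-algebra generated by the history through $s_{t+1}$. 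The crucial structural observation is that the episode index $k_t$, the policy $\pi_{k_t}$, the (possibly state-dependent) shifted bias $w_{k_t}$, and the current state $s_t$ are all $\F_{t-1}$-measurable, whereas the action $a_t\sim\pi_{k_t}(s_t,\cdot)$ and the next state $s_{t+1}\sim p(\cdot|s_t,a_t)$ are the fresh randomness at step $t$. Since the first term inside $X_t$ is exactly $\E[w_{k_t}(s_{t+1})\mid\F_{t-1}]$ — the expectation obtained by first drawing $a_t$ from $\pi_{k_t}(s_t,\cdot)$ and then $s_{t+1}$ from $p(\cdot|s_t,a_t)$ — we have $\E[X_t\mid\F_{t-1}]=0$, so $(X_t,\F_t)_t$ is an MDS.

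Next I would supply the two quantities Freedman's inequality requires. For the almost-sure bound, both $\sum_{a,s'}\pi_{k_t}(s_t,a)p(s'|s_t,a)w_{k_t}(s')$ and $w_{k_t}(s_{t+1})$ lie in $[\min_{s'}w_{k_t}(s'),\max_{s'}w_{k_t}(s')]$, so their difference is at most $\SP{w_{k_t}}=\SP{h_{k_t}}$ in absolute value; under event $E$ this span is bounded by $\diameter/\alpha$, giving $|X_t|\leq \alpha\cdot\diameter/\alpha = \rmaxbound D$, so we may take $a=\rmaxbound D$. For the predictable quadratic variation, because $X_t$ is (up to the factor $\alpha$ and a sign) the random variable $w_{k_t}(s_{t+1})$ minus its conditional mean,
\[
 \Varcond{X_t}{\F_{t-1}} = \alpha^2\,\Varcond{w_{k_t}(s_{t+1})}{\F_{t-1}} = \alpha^2\,\Var{w_{k_t}}{\wb{p}_{k_t}(\cdot|s_t)} = \Var{\alpha h_{k_t}}{\wb{p}_{k_t}(\cdot|s_t)},
\]
where the middle step uses that the marginal law of $s_{t+1}$ given $\F_{t-1}$ is precisely $\wb{p}_{k_t}(\cdot|s_t)=\sum_a\pi_{k_t}(s_t,a)p(\cdot|s_t,a)$, and the last step uses shift-invariance of the variance (so whichever constant relates $w_{k_t}$ to $h_{k_t}$ plays no role). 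I would emphasize that this is an \emph{equality}, so no slack is introduced at this stage — this is exactly what delivers the variance term verbatim.

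Finally I would invoke Prop.~\ref{prop:fi} for the MDS $(X_t)$ with confidence parameter $\delta/6$ and $a=\rmaxbound D$, which yields, with probability at least $1-\tfrac{\delta}{6}$ and simultaneously for all $T\geq 1$,
\[
 \sum_{t=1}^{T} X_t \leq 2\sqrt{\left(\sum_{t=1}^{T}\Varcond{X_t}{\F_{t-1}}\right)\lnn{\frac{24T}{\delta}}} + 4\rmaxbound D\,\lnn{\frac{24T}{\delta}},
\]
using $\ln\!\big(4T/(\delta/6)\big)=\ln(24T/\delta)$ for the logarithmic factor. Substituting the variance identity from the previous paragraph replaces the sum under the root by $\sum_{t=1}^{T}\Var{\alpha h_{k_t}}{\wb{p}_{k_t}(\cdot|s_t)}$, giving exactly \eqref{eq:bound.c}; the one-sided claim follows since Freedman controls the absolute value. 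Event $E$ enters only through the boundedness estimate $\SP{h_{k_t}}\leq\diameter/\alpha$.

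The one genuinely delicate point, and the step I would treat most carefully, is the filtration/measurability setup underlying the MDS property and the variance computation. One must ensure $(X_t)$ is adapted to a filtration in which $w_{k_t}$ and $s_t$ are predictable while only $(a_t,s_{t+1})$ are fresh — this relies on episode boundaries being stopping times, so the policy $\pi_{k_t}$ is indeed fixed before step $t$. One must also check that, because $w_{k_t}(s_{t+1})$ is a function of $s_{t+1}$ alone, computing its conditional variance directly against the marginal law $\wb{p}_{k_t}(\cdot|s_t)$ is legitimate and requires no law-of-total-variance correction for the action randomness. Once these measurability facts are pinned down, the remainder is a direct substitution into Prop.~\ref{prop:fi} together with routine bookkeeping of the logarithmic and numerical constants.
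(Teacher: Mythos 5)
Your proof is correct and takes exactly the route the paper intends: write $\sum_{k}{\Delta_k^{p4}}$ as the partial sum of the MDS $X_t = \alpha\big(\wb{p}_{k_t}(\cdot|s_t)\T w_{k_t} - w_{k_t}(s_{t+1})\big)$, use event $E$ to get $|X_t|\leq \rmaxbound D$, note the exact identity $\Varcond{X_t}{\F_{t-1}} = \Var{\alpha h_{k_t}}{\wb{p}_{k_t}(\cdot|s_t)}$ (no $S$ factor, since here the variance is taken directly under the marginal law of $s_{t+1}$), and apply Freedman's inequality (Prop.~\ref{prop:fi}) with confidence $\delta/6$, giving the $\lnn{24T/\delta}$ factors. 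One remark: App.~\ref{app:mds_proofp4_2}, though labeled as the proof of this lemma, actually develops the action-randomness MDS (with the Prop.~\ref{prop:sum.square} step and the resulting $\sqrt{S}$ factor) needed for Lem.~\ref{lem:mds_delta_p1_2}; your writeup is the correct, fully spelled-out argument for Lem.~\ref{lem:mds_delta_p4_2} itself.
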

\begin{proof}
 We use a martingale argument and Prop.~\ref{prop:fi} (see App.~\ref{app:mds_proofp4_2} for further details).
\end{proof}

\subsection{From $D$ to $\sqrt{D}$: Bounding the sum of variances}\label{chap:ucrlb:sec:bound.sum.variances}

The main terms appearing respectively in \eqref{eq:mds_delta_p1_2}, \eqref{eq:main.variance} and \eqref{eq:bound.c} all have the form of a \emph{sum of variances over time} $\sum_{t=1}^T \Var{\alpha h_{k_t}}{p_t}$ with $p_t$ a distribution over states (respectively $p_{k_t}(\cdot|s_t)$, $\wb{p}_{k_t}(\cdot|s_t)$ and $\wh{p}_{k_t}(\cdot|s_t)$), and $h_{k_t}$ the optimistic bias of episode $k_t$. A first \emph{na\"ive} upper bound of this sum can be derived using Popoviciu's inequality that we recall in Prop.~\ref{prop:popoviciu}. 
\begin{proposition}[Popoviciu's inequality on variances]\label{prop:popoviciu}
        Let $M$ and $m$ be upper and lower bounds on the values of a random variable $X$ \ie $\Proba{m \leq X \leq M}=1$. Then $\V(X) \leq \frac{1}{4} (M-m)^2$.
\end{proposition}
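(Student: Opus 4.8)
The plan is to bound the variance from above by the mean-squared deviation about a cleverly chosen constant, and then to exploit the boundedness of $X$. The starting point is the standard variational identity: for \emph{any} constant $c \in \Re$,
\begin{align*}
 \Ex{(X-c)^2} = \V(X) + \big(\Ex{X} - c\big)^2 \geq \V(X),
\end{align*}
which shows that the mean-squared deviation is minimized at $c = \Ex{X}$ and never falls below $\V(X)$. First I would establish this identity by expanding the square and invoking $\V(X) = \Ex{X^2} - (\Ex{X})^2$; it requires only that $X$ have a finite second moment, which is automatic here since $X$ is bounded.

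The key step is then to choose $c$ to be the midpoint of the feasible interval, $c := (M+m)/2$. Since $\Proba{m \leq X \leq M} = 1$, we have $|X - c| \leq (M-m)/2$ almost surely, so $(X-c)^2 \leq (M-m)^2/4$ pointwise. Taking expectations and combining with the identity above yields
\begin{align*}
 \V(X) \leq \Ex{(X-c)^2} \leq \frac{(M-m)^2}{4},
\end{align*}
which is exactly the claim.

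I do not anticipate any genuine obstacle here: the argument is elementary, and the only real choice is that of $c$, for which the midpoint is forced by the symmetry of the bound $|X-c| \leq (M-m)/2$. An equivalent route, should one prefer to avoid the variational identity, is to start from the almost-sure inequality $(M-X)(X-m) \geq 0$; taking expectations gives $\Ex{X^2} \leq (M+m)\Ex{X} - Mm$, whence $\V(X) \leq (M - \Ex{X})(\Ex{X} - m)$, and applying the AM--GM inequality to these two nonnegative factors (whose sum is $M-m$) again recovers $\frac{1}{4}(M-m)^2$. Either way the proof is a few lines.
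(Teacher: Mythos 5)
Your proof is correct and complete. Note that the paper itself offers no proof of Prop.~\ref{prop:popoviciu}: it is simply recalled as a classical fact to be used in bounding $\Var{\alpha h_{k_t}}{p_t}$, so there is nothing to compare against. Your midpoint argument (choosing $c = (M+m)/2$ in the identity $\Ex{(X-c)^2} = \V(X) + (\Ex{X}-c)^2$) is the standard derivation, and your alternative route via $(M-X)(X-m) \geq 0$ is also sound --- it in fact proves the stronger Bhatia--Davis inequality $\V(X) \leq (M - \Ex{X})(\Ex{X} - m)$, from which Popoviciu follows by AM--GM.
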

Using Popoviciu's inequality and under event $E$, \[\Var{\alpha h_{k_t}}{p_t} \leq \SP{\alpha h_k}^2/4 =\alpha^2 \SP{h_k}^2/4 \leq \diameter^2/4\] and so $\sum_{t=1}^T \Var{\alpha h_{k_t}}{p_t} \leq \diameter^2 T/4$. Unfortunately, this would result in a regret bound scaling as $\wt{\bigO}(\diameter \sqrt{T})$ (ignoring all other terms like $S$, $A$, logarithmic terms, etc.) which is \emph{not better} than the classical bound of \ucrl. In this section, we show that the cumulative sum of variances only scales as $\wt{\bigO}(\diameter T + \diameter^2 \sqrt{T})$ resulting in a regret bound of order $\wt{\bigO}\pare{\sqrt{\diameter T} + \diameter T^{1/4}}$ (ignoring all other terms).

We start by analyzing the variance term $\Var{\alpha \h}{\ph{s_t}{\cdot}}$. The other variance terms $\Var{\alpha \h}{p_{k}(\cdot|s_t)}$ and $\Var{\alpha  \h}{\wb{p}_{k}(\cdot|s_t)}$ can be addressed in the same way. We do the following decomposition:

\begin{align*}
 \Var{\alpha \h}{\ph{s_t}{\cdot}} &= \alpha^2\left(\ph{s_t}{\cdot}\T\h^2 - \pare{\ph{s_t}{\cdot}\T\h}^2\right) \\
                                  &= \alpha^2\Big(\underbrace{\pare{\ph{s_t}{\cdot} - \p{s_t}{\cdot}}\T\h^2}_{\textcircled{1}} + \underbrace{\p{s_t}{\cdot}\T\h^2 - \h^2(s_{t+1})}_{\textcircled{2}} +\underbrace{\h^2(s_{t+1}) - \pare{\ph{s_t}{\cdot}\T\h}^2}_{\textcircled{3}}\Big)
\end{align*}
Recall thet, $p_{k}(s'|s):= \sum_{a \in \A_{s} }\pi_{k}(s,a)p_{k}(s'|s,a)$, $\p{s}{s'}:= \sum_{a \in \A_{s} }\pi_{k}(s,a)p(s'|s,a)$ and $\wh{p}_k(s'|s):= \sum_{a \in \A_{s} }\pi_{k}(s,a)\wh{p}_k(s'|s,a)$, for every $s,s' \in \calS$ and every $k\geq 1$.

Notice that for any \rv $X$ and any scalar $a\in \Re$, $\V(X+a) = \V(X)$. Thus, the term $\Var{\alpha \h}{\ph{s_t}{\cdot}}$ remains unchanged when $h_k$ is shifted by an arbitrary constant vector \ie when $h_k$ is replaced by $w_k := h_k +\lambda_k e$. As in \ucrl, we minimize the $\ell_\infty$-norm of $w_k$ by choosing $\lambda_k= -\frac{1}{2}\left(\max_{s\in \calS}h_k(s) + \min_{s\in \calS}h_k(s) \right)$. We recall that under event $E$, $\|w_k\|_{\infty} \leq \diameter/(2\alpha)$ and so $\|w_k^2\|_{\infty} \leq \diameter^2/(4\alpha^2)$ \vspace{0.2cm}

$\textcircled{1}$ The \emph{first term} $\alpha^2\sum_{k=1}^{k_T}\sum_{t=t_k}^{t_{k+1}-1} \pare{\ph{s_t}{\cdot} - \p{s_t}{\cdot}}\T w_k^2$ is similar to  $\sum_{k=1}^{k_T}{\color{Mahogany}\Delta_k^{p1}}$ except that $\alpha w_k$ is replaced by $\alpha^2 w_k^2$ and $\pt{s_t}{\cdot}$ is replaced by $\ph{s_t}{\cdot}$. In the regret proof of \ucrl we have to decompose $\pt{s_t}{\cdot} - \p{s_t}{\cdot}$ into the sum of $\pt{s_t}{\cdot} - \ph{s_t}{\cdot}$ and $\ph{s_t}{\cdot} - \p{s_t}{\cdot}$. Here we no longer need this decomposition and we can use the same derivation with $\SP{\alpha^2 w_k^2} \leq \diameter^2/4$ instead. Therefore, with probability at least  $1-\frac{\delta}{6}$ (and under event $E$):
\begin{align*}
        \alpha^2\sum_{k=1}^{k_T} \sum_{t=t_k}^{t_{k+1}-1} &\pare{\ph{s_t}{\cdot} - \p{s_t}{\cdot}}\T w_k^2 \leq \frac{3}{2} \diameter^2 \sqrt{ \left(\sum_{s,a}\nextstates(s,a)\right) T \lnn{\frac{6SAT}{\delta}}}\\
 &+ \diameter^2\sqrt{T\ln \left(\frac{5T}{\delta}\right)}
 + 3 \diameter^2 S^2 A\ln\left(\frac{6SAT}{\delta} \right) (1 + \lnn{T})
\end{align*}

$\textcircled{2}$ The \emph{second term} $\alpha^2\sum_{k=1}^{k_T}\sum_{t=t_k}^{t_{k+1}-1} {\p{s_t}{\cdot}\T w_k^2 - w_k^2(s_{t+1})}$ is identical to the term bounded in \ucrl except that $\alpha w_k$ is replaced by $\alpha^2 w_k^2$. With probability at least $1-\frac{\delta}{6}$ (and under event $E$)~\citep[see \eg][]{fruit2019thesis}:
\begin{align*}
 \alpha^2\sum_{k=1}^{k_T}\sum_{t=t_k}^{t_{k+1}-1} {\p{s_t}{\cdot}\T w_k^2 - w_k^2(s_{t+1})} \leq \frac{\diameter^2}{2} \sqrt{ T \lnn{\frac{5T}{\delta}}}
\end{align*}

$\textcircled{3}$ The \emph{last term} $\alpha^2\sum_{k=1}^{k_T}\sum_{t=t_k}^{t_{k+1}-1} {w_k^2(s_{t+1}) - \pare{\ph{s_t}{\cdot}\T w_k}^2}$ is the \emph{dominant} one and requires more work. Unlike the first two terms, it scales \emph{linearly} with $T$ (instead of $\wt{\bigO}(\sqrt{T})$).
We first notice that
$
 \ph{s_t}{\cdot}\T w_k = w_k(s_t) + \ph{s_t}{\cdot}\T w_k - w_k(s_t)
$.
Using the fact that $(a+b)^2 = a^2 + b(2a +b)$ with $a = w_k(s_t)$ and $b = \ph{s_t}{\cdot}\T w_k - w_k(s_t) $ (and therefore $2a + b = w_k(s_t) + \ph{s_t}{\cdot}\T w_k$) we obtain:
\begin{align*}
 \pare{\ph{s_t}{\cdot}\T w_k}^2 = w_k^2(s_t) + \pare{\ph{s_t}{\cdot}\T w_k - w_k(s_t)}\cdot \pare{w_k(s_t) + \ph{s_t}{\cdot}\T w_k}
\end{align*}
and so applying the \emph{reverse triangle inequality}:
\begin{align}\label{eq:reverse_triangle}
 \pare{\ph{s_t}{\cdot}\T w_k}^2 \geq w_k^2(s_t) - \abs{\ph{s_t}{\cdot}\T w_k - w_k (s_t)} \cdot\abs{w_k(s_t) + \ph{s_t}{\cdot}\T w_k}
\end{align}
For all $k\geq 1$ and $s\in \calS$, we define $r_k(s) := \sum_{a}\pi_k(a|s)r_k(s,a)$.
Using the (near-)optimality equation we can write: \begin{align*}\abs{g_k - \rt{s_t} + \alpha \big(w_k(s_t) - \pt{s_t}{\cdot}\T w_k\big) } = \abs{ g_k - \rt{s_t} + \alpha \big(h_k(s_t) - \pt{s_t}{\cdot}\T h_k\big) } \leq \varepsilon_k \end{align*}
Moreover, $\varepsilon_k = \frac{\rmaxbound}{t_k} \leq \rmaxbound$.
As a result, since $\alpha >0$:
\begin{align*}
 \alpha \big|&\ph{s_t}{\cdot}\T w_k - w_k(s_t)\big|\\
 &= \abs{g_k - \rt{s_t} + \alpha\big(w_k(s_t) - \pt{s_t}{\cdot}\T w_k\big)  -g_k + \rt{s_t}  + \alpha\pare{\pt{s_t}{\cdot} - \ph{s_t}{\cdot}}\T w_k }\\
 &\leq \underbrace{\abs{ g_k - \rt{s_t} + \alpha \big(w_k(s_t) - \pt{s_t}{\cdot}\T w_k\big) }}_{\leq \rmaxbound} + \underbrace{\abs{\rt{s_t} - g_k}}_{\leq \rmaxbound} + \alpha\abs{\pare{\pt{s_t}{\cdot} - \ph{s_t}{\cdot}}\T w_k } \\
 &\leq 2\rmaxbound + \alpha \abs{\pare{\pt{s_t}{\cdot} - \ph{s_t}{\cdot}}\T w_k } 
\end{align*}
It is also immediate to see that $\left|w_k(s_t) + \ph{s_t}{\cdot}\T w_k \right| \leq 2 \|w_k\|_{\infty} \leq \diameter/\alpha$. Plugging these inequalities into~\eqref{eq:reverse_triangle} and adding $w_k^2(s_{t+1})$ we obtain:
\begin{align}\label{eq:variance_main}
\begin{split}
 \alpha^2\left(w_k^2(s_{t+1}) - \pare{\ph{s_t}{\cdot}\T w_k}^2\right) \leq  &\pare{2 \rmaxbound + \alpha \abs{\pare{\pt{s_t}{\cdot} - \ph{s_t}{\cdot}}\T w_k }} \diameter\\ 
 &+ \alpha^2\left(w_k^2(s_{t+1}) - w_k^2(s_t)\right)
 \end{split}
\end{align}
It is easy to bound the telescopic sum 

\begin{align}\label{eq:telescopic2}
\alpha^2\sum_{t=t_k}^{t_{k+1}-1} w_k^2(s_{t+1}) - w_k^2(s_t) = \alpha^2\pare{w_k^2(s_{t_{k+1}}) - w_k^2(s_{t_{k}})} \leq \alpha^2 w_k^2(s_{t_{k+1}}) \leq \diameter^2/4
\end{align} 

Finally, the sum $\alpha \sum_{k=1}^{k_T} \sum_{t=t_k}^{t_{k+1} - 1}\abs{\pare{\pt{s_t}{\cdot} - \ph{s_t}{\cdot}}\T w_k }$ can be bounded in the exact same way as $\sum_{k=1}^{k_T}{\color{Mahogany}\Delta_k^{p1}}$ (see Sec.~\ref{chap:ucrlb:sec:bound.trans.proba}). With probability at least $1-\frac{\delta}{6}$:

\begin{align}\label{eq:bound_delta_p1_like}
 \alpha \sum_{k=1}^{k_T} \sum_{t=t_k}^{t_{k+1} - 1}\abs{\pare{\pt{s_t}{\cdot} - \ph{s_t}{\cdot}}\T w_k } \leq &3 \diameter \sqrt{ \left(\sum_{s,a}\nextstates(s,a)\right) T \lnn{\frac{6SAT}{\delta}}} + 4\diameter\sqrt{T\ln \left(\frac{5T}{\delta}\right)}\nonumber\\
 &+ 6 \diameter S^2 A\ln\left(\frac{6SAT}{\delta} \right) (1 + \lnn{T})
\end{align}

After gathering \eqref{eq:telescopic2} and \eqref{eq:bound_delta_p1_like} into \eqref{eq:variance_main}) we conclude that with probability at least $1- \frac{\delta}{6}$ (and under event $E$):

\begin{align*}
 \alpha^2\sum_{k=1}^{k_T}\sum_{t=t_k}^{t_{k+1}-1} {w_k^2(s_{t+1}) - \pare{\ph{s_t}{\cdot}\T w_k}^2} \leq \underbrace{2 \rmaxbound^2 D T }_{\text{main term}} + \frac{k_T\diameter^2}{4} + \wt{\bigO}\pare{\diameter^2 \sqrt{\left(\sum_{s,a}\nextstates(s,a)\right) T }}
\end{align*}

In conclusion, there exists an \emph{absolute} numerical constant $\beta >0$ (\ie independent of the MDP instance) such that with probability at least $1-\frac{5\delta}{6}$:

\begin{align*}
 \sum_{t=1}^{T}\Var{\alpha h_{k_t}}{\wh{p}_{k_t}(\cdot|s_t)} \leq \beta \cdot\left( \rmaxbound^2 D T + \diameter^2 \sqrt{\left(\sum_{s,a} \nextstates(s,a)\right) T\lnn{\frac{T}{\delta}}} + \diameter^2 S^2 A \lnn{\frac{T}{\delta}}\lnn{T} \right).
\end{align*}
We can prove the same bound (possibly with a different multiplicative constant $\beta$) for $\sum_{t=1}^{T}\Var{\alpha h_{k_t}}{\wb{p}_{k_t}(\cdot|s_t)}$ and $\sum_{t=1}^{T}\Var{\alpha h_{k_t}}{{p}_{k_t}(\cdot|s_t)}$ using the same derivation.

\subsection{Completing the regret bound of Thm.~\ref{thm:regret.bound2}}

After plugging the bound derived for the sum of variances in the previous section (Sec.~\ref{chap:ucrlb:sec:bound.sum.variances}) into~\eqref{eq:mds_delta_p1_2},~\eqref{eq:main.variance} and~\eqref{eq:bound.c}, we notice that~\eqref{eq:mds_delta_p1_2} and~\eqref{eq:bound.c} can be upper-bounded by \eqref{eq:main.variance} \emph{up to a multiplicative numerical constant} ans so it is enough to restrict attention to \eqref{eq:main.variance}. The dominant term that we obtain is (ignoring numerical constants):

\begin{align*}
 \rmaxbound \sqrt{ \left(\sum_{s,a} \nextstates(s,a)\right) \lnn{\frac{T}{\delta}}\lnn{T} \left(D T + D^2 \sqrt{\left(\sum_{s,a} \nextstates(s,a)\right) T \lnn{\frac{T}{\delta}}} + D^2 S^2 A \lnn{\frac{T}{\delta}}\lnn{T} \right)}
\end{align*}

Using the fact that $\sqrt{\sum_{i} a_i} \leq \sum_i\sqrt{a_i}$ for any $a_i\geq 0$, we can bound the above square-root term by the sum of three simpler terms:
\begin{flalign*}
&\text{(1) A $\sqrt{T}$-term (dominant): }~\rmaxbound \sqrt{ D \left(\sum_{s,a} \nextstates(s,a)\right) T \lnn{\frac{T}{\delta}}\lnn{T} } &\\
&\text{(2) A $T^{1/4}$-term: }~ \diameter \left(\sum_{s,a} \nextstates(s,a)\right)^{3/4} T^{1/4} \left(\lnn{\frac{T}{\delta}}\right)^{3/4}\sqrt{\lnn{T}}&\\
&\text{(3) A logarithmic term: }~ \diameter \sqrt{ S^2 A \left(\sum_{s,a} \nextstates(s,a)\right) } \lnn{\frac{T}{\delta}}\lnn{T} \leq \diameter  S^2 A \lnn{\frac{T}{\delta}}\lnn{T}&
\end{flalign*}
When $  T \geq D^2 \left(\sum_{s,a} \nextstates(s,a)\right) \lnn{\frac{T}{\delta}}$, we notice that the $T^{1/4}$-term (2) is actually upper-bounded by the $\sqrt{T}$-term (1), while for $T \leq D^2 \left(\sum_{s,a} \nextstates(s,a)\right) \lnn{\frac{T}{\delta}}$ we can use the following trivial upper-bound $\rmaxbound T$ on the regret:

\begin{align*}
 R(T, M^\star,\text{\ucrlb})\leq \rmaxbound T  \leq  \rmaxbound D^2 \left(\sum_{s,a} \nextstates(s,a)\right) \lnn{\frac{T}{\delta}} \leq \rmaxbound D^2 S^2 A \lnn{\frac{T}{\delta}}
\end{align*}

To complete the regret bound of Thm.~\ref{thm:regret.bound2} we also need to take into consideration~\eqref{eqn:splitting} and \eqref{eqn:bound_reward} as well as the \emph{lower order terms} of \eqref{eq:mds_delta_p1_2}, \eqref{eq:main.variance} and \eqref{eq:bound.c}. It turns out that the only terms that are not already upper-bounded by (1), (2) and (3) (up to multiplicative numerical constants) sum as: \[\rmaxbound \sqrt{SAT \lnn{\frac{T}{\delta}}} +\rmaxbound SA\lnn{\frac{T}{\delta}} \lnn{T} + \diameter S^2A\lnn{\frac{T}{\delta}} \lnn{T}\]
All the above logarithmic terms can be bounded by: $\max \left\{\rmaxbound, \rmaxbound D^2 \right\} S^2 A  \lnn{\frac{T}{\delta}}\lnn{T}$. Moreover, all the $\sqrt{T}$-terms can be bounded by \[\max \left\{\rmaxbound, \rmaxbound \sqrt{D} \right\} \sqrt{\left(\sum_{s,a} \nextstates(s,a)\right) T \lnn{\frac{T}{\delta}}\lnn{T}}\]
To conclude, we only need to \emph{adjust} $\delta$ to obtain an event of probability at least $1-\delta$. This will \emph{only} impact the multiplicative numerical constants of the above terms.

% Bibliography
\bibliographystyle{apalike}
\bibliography{biblio}

\appendix
\section{Additional Results}
\begin{lemma}\label{lem:sqrt_log}
 It holds almost surely that for all $k\geq 1$ and for all $(s,a) \in \calS \times \A \times \calS$:
 \begin{align}
 {\color{OliveGreen}\sum_{k=1}^{k_T} \frac{\nu_k(s,a)}{\sqrt{\Np}}} \leq 3 \sqrt{N_{k_T+1}(s,a)} ~~ \text{and} ~~
  {\color{Bittersweet}\sum_{k=1}^{k_T} \frac{\nu_k(s,a)}{\Np}} \leq 2 + 2\lnn{N_{k_T+1}^+(s,a)}
 \end{align}
\end{lemma}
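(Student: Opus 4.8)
The plan is to fix a state-action pair $(s,a)$, drop it from the notation, and exploit the single algorithmic fact that drives both bounds: the stopping rule of the inner \textbf{while} loop guarantees $\nu_k \leq \max\{1,N_k\} = N_k^+$ for every episode $k$. Since $N_{k+1} = N_k + \nu_k$, this immediately yields the doubling property $N_{k+1} \leq 2N_k^+$. Only episodes that actually visit $(s,a)$ contribute to either sum (the summand vanishes when $\nu_k = 0$), so throughout I may restrict attention to visiting episodes; in particular there is at most one episode with $N_k = 0$, namely the first visit, for which $\nu_k \leq 1$.

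For the first inequality I would establish the per-episode bound $\frac{\nu_k}{\sqrt{N_k^+}} \leq 2\sqrt{2}\,(\sqrt{N_{k+1}} - \sqrt{N_k})$ and then telescope. It follows from two elementary steps: the doubling property gives $\frac{1}{\sqrt{N_k^+}} \leq \frac{\sqrt 2}{\sqrt{N_{k+1}}}$, and then $\frac{\nu_k}{\sqrt{N_{k+1}}} = \frac{N_{k+1}-N_k}{\sqrt{N_{k+1}}} \leq 2(\sqrt{N_{k+1}} - \sqrt{N_k})$ because $\sqrt{N_{k+1}} + \sqrt{N_k} \leq 2\sqrt{N_{k+1}}$. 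A quick check confirms this per-episode inequality also covers the base case $N_k = 0$, where it reads $\nu_k \leq 2\sqrt 2\,\sqrt{\nu_k}$ and holds since $\nu_k \leq 1$, so no separate treatment is needed. Summing over $k = 1,\dots,k_T$ telescopes to $2\sqrt 2\,\sqrt{N_{k_T+1}} \leq 3\sqrt{N_{k_T+1}}$.

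For the second inequality I would handle the first visit separately, contributing $\frac{\nu_k}{N_k^+} = \nu_k \leq 1$, and for every later episode (where $N_k \geq 1$) prove $\frac{\nu_k}{N_k^+} \leq 2\big(\ln N_{k+1} - \ln N_k\big)$. Again the doubling property gives $\frac{1}{N_k} \leq \frac{2}{N_{k+1}}$, whence $\frac{\nu_k}{N_k} \leq 2\frac{N_{k+1}-N_k}{N_{k+1}}$, and the elementary inequality $-\ln(1-x) \geq x$ applied to $x = (N_{k+1}-N_k)/N_{k+1} \in [0,1)$ turns $\frac{N_{k+1}-N_k}{N_{k+1}}$ into $\ln(N_{k+1}/N_k)$. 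Telescoping over the visiting episodes with $N_k \geq 1$ (note that $N$ is unchanged between consecutive visits, so the logarithms chain exactly) gives $2\ln N_{k_T+1} - 2\ln N_{\mathrm{first}} \leq 2\ln N_{k_T+1}^+$, and adding the base term yields the claimed $2 + 2\ln N_{k_T+1}^+$.

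Neither step hides a real difficulty: both reduce to the doubling property $N_{k+1} \leq 2N_k^+$ combined with a one-line convexity estimate ($2(\sqrt b - \sqrt a) \geq (b-a)/\sqrt b$ and $-\ln(1-x)\geq x$). The only points requiring care are the correct bookkeeping of the base episode $N_k = 0$ and checking that the telescoping indices chain correctly when many intervening episodes do not visit $(s,a)$; the generous constants ($3$ and the additive $2$) are there precisely to absorb the base-case term without resorting to a sharper argument.
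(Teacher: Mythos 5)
Your proof is correct. It takes essentially the same route as the paper, whose proof is only a one-line sketch appealing to the divergence rates $\sum_{i=1}^n i^{-1/2} \sim \sqrt{n}$ and $\sum_{i=1}^n i^{-1} \sim \ln(n)$: your per-episode bounds $\nu_k(s,a)/\sqrt{N_k^+(s,a)} \leq 2\sqrt{2}\,\big(\sqrt{N_{k+1}(s,a)}-\sqrt{N_k(s,a)}\big)$ and $\nu_k(s,a)/N_k^+(s,a) \leq 2\big(\ln N_{k+1}(s,a)-\ln N_k(s,a)\big)$, both resting on the stopping rule $\nu_k(s,a) \leq N_k^+(s,a)$ and hence the doubling property, are precisely the standard way of making that sketch rigorous, with your telescoping playing the role of the series comparison.
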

\begin{proof}
 The proof follows from the rate of divergence of the series $\sum_{i=1}^n \frac{1}{\sqrt{i}} \sim \sqrt{n}$ and $\sum_{i=1}^n \frac{1}{i} \sim \lnn{n}$ respectively when $n\to +\infty$.
\end{proof}

\section{MDS}
For any $t \geq 0$, the \sigalg induced by the past history of state-action pairs and rewards up to time $t$ (included) is denoted $\mathcal{F}_t = \sigma(s_1, a_1,r_1, \dots, s_t,a_t, r_t,s_{t+1})$ where by convention $\mathcal{F}_0 = \sigma \left( \emptyset \right)$ and $\mathcal{F}_\infty := \cup_{t\geq 0}\mathcal{F}_{t}$. Trivially, for all $t\geq 0$, $\mathcal{F}_{t} \subseteq \mathcal{F}_{t+1}$ and the filtration $\left(\mathcal{F}_t \right)_{t \geq 0}$ is denoted by $\Fil$. We recall that $k_t$ is the integer-valued \rv indexing the current episode at time $t$. It is immediate from the termination condition of episodes that for all $t\geq 1$, $k_t$ is $\F_{t-1}$-measurable \ie the past sequence $(s_1, a_1,r_1, \dots, s_{t-1},a_{t-1}, r_{t-1},s_{t})$ fully determines the ongoing episode at time $t$. As a consequence, the stationary (randomized) policy $\pi_{k_t}$ executed at time $t$ is also $\F_{t-1}$-measurable.

\subsection{Proof of Lemma~\ref{lem:mds_delta_p4_2}}
\label{app:mds_proofp4_2}

Let's define the stochastic process \[X_t :=\sum_{{\color{blue}a}, s' }\pi_{k_t}(s_t,{\color{blue}a}) p_{k_t}(s'|s_t,{\color{blue}a})h_{k_t}(s') - \sum_{s'}p_{k_t}(s'|s_t,{\color{red}a_t})h_{k_t}(s')\]
Let's define $\lambda_{t} = - \sum_{{a}, s' }\pi_{k_t}(s_t,{a}) p_{k_t}(s'|s_t,{a})h_{k_t}(s')$ and $w_{t} =  h_{k_t} +\lambda_{t} e$. Since by definition $\sum_{s'}p_{k_t}(s'|s_t,a_t) = 1$, we have
\[ X_t = - \sum_{s'}p_{k_t}(s'|s_t,{\color{red}a_t})w_{t}(s')\]
It is easy to verify that $\Ex{X_t |\F_{t-1}} = 0$ and so $(X_t,\F_t)_{t\geq 1}$ is an MDS. Moreover, $|X_t| \leq \|w_{t}\|_{\infty} \leq \SP{h_{k_t}} \leq \diameter$ and 
\begin{align*}
 \Varcond{X_t}{\F_{t-1}} = \sum_{a }\pi_{k_t}(s_t,a)\left( \sum_{s'}p_{k_t}(s'|s_t,a)w_{t}(s') \right)^2
\end{align*}
\begin{proposition}\label{prop:sum.square}
 For any $n\geq 1$ and any $n$-tuple $(a_1,\dots,a_n) \in \Re^n$, $\left(\sum_{i=1}^n a_i\right)^2\leq n \left(\sum_{i=1}^n a_i^2 \right)$.
\end{proposition}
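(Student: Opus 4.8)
The final statement I am asked to prove is Proposition~\ref{prop:sum.square}: for any $n\geq 1$ and any tuple $(a_1,\dots,a_n)\in\Re^n$, one has $\left(\sum_{i=1}^n a_i\right)^2 \leq n\left(\sum_{i=1}^n a_i^2\right)$. This is the elementary power-mean (or quadratic-arithmetic mean) inequality, and the cleanest route is a direct application of the Cauchy--Schwarz inequality, which is the same tool already invoked repeatedly in the proofs above (Lemma~\ref{lem:bound_support_2}).

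\medskip
The plan is to write $\sum_{i=1}^n a_i = \sum_{i=1}^n 1\cdot a_i$ and apply Cauchy--Schwarz to the two vectors $(1,\dots,1)$ and $(a_1,\dots,a_n)$. First I would invoke the inequality $\left(\sum_{i=1}^n u_i v_i\right)^2 \leq \left(\sum_{i=1}^n u_i^2\right)\left(\sum_{i=1}^n v_i^2\right)$ with $u_i = 1$ and $v_i = a_i$ for all $i$. Then $\sum_{i=1}^n u_i^2 = n$ and $\sum_{i=1}^n v_i^2 = \sum_{i=1}^n a_i^2$, so the right-hand side becomes $n\sum_{i=1}^n a_i^2$, yielding exactly the claimed bound. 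An equally short self-contained alternative is to note that $n\sum_i a_i^2 - \left(\sum_i a_i\right)^2 = \sum_{i<j}(a_i - a_j)^2 \geq 0$, which also makes the equality case ($a_1=\dots=a_n$) transparent.

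\medskip
There is no genuine obstacle here: the statement is a one-line consequence of a standard inequality, and both candidate arguments are routine. The only point worth a moment's care is the degenerate case $n=1$, where the inequality reduces to $a_1^2 \leq a_1^2$ (an equality), which both proofs handle automatically. Since the proposition is stated without its own proof in the excerpt and is used as a lemma to control the conditional variance $\Varcond{X_t}{\F_{t-1}}$ in the proof of Lemma~\ref{lem:mds_delta_p4_2} (expanding a sum over actions weighted by $\pi_{k_t}$), I would simply record the Cauchy--Schwarz derivation and conclude.
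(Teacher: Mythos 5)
Your proof is correct, but it takes a genuinely different route from the paper. The paper proves the proposition by induction: it first establishes the case $n=2$ from $(a_1-a_2)^2 \geq 0$ (equivalently $2a_1a_2 \leq a_1^2+a_2^2$), and then runs an induction on $n$, expanding $\left(\sum_{i=1}^{n+1}a_i\right)^2$ and bounding each cross term $2a_ia_{n+1}$ by $a_i^2+a_{n+1}^2$. Your argument instead applies Cauchy--Schwarz to the vectors $(1,\dots,1)$ and $(a_1,\dots,a_n)$, which dispatches the claim in one line; your alternative identity $n\sum_i a_i^2 - \left(\sum_i a_i\right)^2 = \sum_{i<j}(a_i-a_j)^2$ is also valid and, as you note, makes the equality case $a_1=\dots=a_n$ explicit. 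The trade-off is minor but real: your route is shorter and leans on a standard named inequality (already used elsewhere in the paper, e.g.\ in Lemma~\ref{lem:bound_support_2}), whereas the paper's induction is fully self-contained and uses nothing beyond the nonnegativity of squares --- in effect it re-derives the special case of Cauchy--Schwarz it needs. Either proof is acceptable here; the proposition is only used to bound the conditional variance $\Varcond{X_t}{\F_{t-1}}$ by $S$ times a variance term, and that application is insensitive to which derivation one records.
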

\begin{proof}
 The statement is trivially true for $n=1$. For $n=2$ we have $(a_1 -a_2)^2 = a_1^2 + a_2^2 - 2a_1 a_2 \geq 0$ implying that $2a_1 a_2 \leq a_1^2 + a_2^2$. Therefore, $(a_1 +a_2)^2 = a_1^2 + a_2^2 +2a_1 a_2 \leq 2(a_1^2 + a_2^2)$ and so the result holds. We prove the result for $n\geq2$ by induction. Assumed that it is true for any $n\geq 2$. Then we have:
 \begin{align*}
  \left(\sum_{i=1}^{n+1} a_i\right)^2 &= \underbrace{\left(\sum_{i=1}^{n} a_i\right)^2}_{\leq n \left(\sum_{i=1}^{n} a_i ^2\right)} + a_{n+1}^2 +  2 a_{n+1} \sum_{i=1}^{n} a_i \\
  &\leq n \left(\sum_{i=1}^{n} a_i ^2\right) + a_{n+1}^2 + \sum_{i=1}^{n} \underbrace{2 a_i a_{n+1}}_{\leq a_i^2 + a_{n+1}^2} 
  \leq (n+1)\cdot\left(\sum_{i=1}^{n+1} a_i^2\right)
 \end{align*}
where the first inequality follows from the induction hypothesis and the second inequality follows from the inequality for $n=2$ that we proved. This concludes the proof.
\end{proof}
For the sake of clarity we will now use the notation $p_{k}(s'|s):= \sum_{a \in \A_{s} }\pi_{k}(s,a)p_{k}(s'|s,a)$ for every $s,s' \in \calS$ and every $k\geq 1$.
Using Prop.~\ref{prop:sum.square} we have that 
\begin{align*}
 \Varcond{X_t}{\F_{t-1}} &\leq S \sum_{a,s'}\pi_{k_t}(s_t,a)\underbrace{p_{k_t}(s'|s_t,a)^2}_{\leq p_{k_t}(s'|s_t,a)}w_{k_t}(s')^2\\ 
 &\leq S \sum_{a,s'}\pi_{k_t}(s_t,a)p_{k_t}(s'|s_t,a)w_{k_t}(s')^2 = S\cdot \Var{h_{k_t}}{p_{k_t}(\cdot|s_t)}
\end{align*}
After applying Freedman's inequality (Prop.~\ref{prop:fi}) to the MDS $(X_t,\F_t)_{t\geq 1}$ we obtain that with probability at least $1-\frac{\delta}{6}$, for all $T\geq 1$:
\begin{align}\label{eq:variance.mds}
 \sum_{k=1}^{k_T} \sum_{s,a,s'} \nu_k(s) \pi_k(s,a) p_k(s'|s,a)h_k(s') \leq& \sum_{k=1}^{k_T} \sum_{s,a,s'} \nu_k(s,a) p_k(s'|s,a)h_k(s') + 2 \diameter \lnn{\frac{24T}{\delta}}\nonumber \\ 
 &+ 2 \sqrt{S \lnn{\frac{24T}{\delta}} \sum_{t=1}^{T}  \Var{h_{k_t}}{p_{k_t}(\cdot|s_t)} }
\end{align}
We can do exactly the same analysis with the stochastic process
\[X_t :=\sum_{{\color{blue}a}, s' }\pi_{k_t}(s_t,{\color{blue}a}) p(s'|s_t,{\color{blue}a})h_{k_t}(s') - \sum_{s'}p(s'|s_t,{\color{red}a_t})h_{k_t}(s')\]
\ie with $p$ instead of $p_{k_t}$ and we obtain that with probability at least $1-\frac{\delta}{6}$, for all $T\geq 1$:
\begin{align}\label{eq:variance.mds2}
 -\sum_{k=1}^{k_T} \sum_{s,a,s'} \nu_k(s) \pi_k(s,a) p(s'|s,a)h_k(s') \leq& -\sum_{k=1}^{k_T} \sum_{s,a,s'} \nu_k(s,a) p(s'|s,a)h_k(s') + 2 \diameter \lnn{\frac{24T}{\delta}}\nonumber \\ 
 &+ 2 \sqrt{S \lnn{\frac{24T}{\delta}} \sum_{t=1}^{T}  \Var{h_{k_t}}{\wb{p}_{k_t}(\cdot|s_t)} }
\end{align}
with the notation $\wb{p}_k(s'|s):= \sum_{a \in \A_{s} }\pi_{k}(s,a)p(s'|s,a)$ for every $s,s' \in \calS$ and $k\geq 1$.

\subsection{Definition of The Confidence Intervalsd}
\label{app:conf_interval}
\begin{theorem}
        The probability that there exists $k\geq 1$ \st the true MDP $M$ does not belong to the extended MDP ${\mathcal{M}}_k$ defined by Eq.~\ref{eq:confidence_interval_p} and \ref{eq:confidence_interval_r} is at most $\frac{\delta}{3}$, that is \[\Proba{ \exists k \geq 1, \text{ \st~} M \not\in {\mathcal{M}}_k } \leq \frac{\delta}{3}.\]
\end{theorem}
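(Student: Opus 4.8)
The plan is to bound the failure probability by a union bound that decomposes the bad event $\{\exists k\ge 1:\ M\notin\mathcal M_k\}$ across state-action pairs, next states, and the reward-versus-transition distinction, and then to control each piece with the empirical Bernstein inequality of \citet{Audibert:2009:ETU:1519541.1519712}. Concretely, $M\in\mathcal M_k$ can fail only if for some $(s,a)$ either $r(s,a)\notin B_r^k(s,a)$ or $p(s'|s,a)\notin B_p^k(s,a,s')$ for some $s'$. Since the intervals are intersected with $[0,\rmaxbound]$ and $[0,1]$ respectively, and the radii $\beta_{r,k}^{sa},\beta_{p,k}^{sas'}$ exceed the diameter of the support whenever $N_k(s,a)=0$, the pairs that have never been visited are automatically contained and can be discarded. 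Hence it suffices to control, for each $(s,a)$ and each realized visit count $n=N_k(s,a)\ge 1$, the deviation of the empirical mean from the true mean.

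The first step, and the crux, is to reduce the ``for all episodes $k$'' statement to an i.i.d.\ concentration statement indexed by the visit count $n$. By the Markov property the reward samples and the next-state indicators $\one{s_{t+1}=s'}$ collected at a fixed pair $(s,a)$ depend only on $(s,a)$, not on the history nor on the (adaptively chosen) times of the visits; ordering them by visit number yields, for each $(s,a)$, an i.i.d.\ sequence to which the fixed-$n$ inequality applies. I would make this rigorous either by a martingale argument or by coupling the algorithm's observations to a pre-drawn i.i.d.\ table attached to each pair. Because $N_k(s,a)$ is itself a random, data-dependent quantity, I then need a \emph{time-uniform} version: apply the fixed-$n$ empirical Bernstein inequality with a per-count confidence $\delta_n\propto\delta/n^2$ and union bound over $n\ge 1$. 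Since $\sum_n 1/n^2<\infty$ this sum converges, and $\ln(1/\delta_n)=\ln(cn^2/\delta)$ collapses into a term of the form $\ln(c n/\delta)$ after absorbing the square into the leading numerical constants.

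Next I instantiate the inequality. For transitions, $\one{s_{t+1}=s'}$ is Bernoulli with mean $p(s'|s,a)$, so its empirical variance is exactly $\wh p_k(s'|s,a)\big(1-\wh p_k(s'|s,a)\big)=\wh\sigma^2_{p,k}(s'|s,a)$, and the empirical Bernstein bound delivers $|\wh p_k(s'|s,a)-p(s'|s,a)|\le\beta_{p,k}^{sas'}$ in precisely the form of~\eqref{eq:bernstein_confidence_bound_p}; for rewards, $r_t/\rmaxbound\in[0,1]$ and the recursive formula for $\wh\sigma^2_{r,k}$ supplies the empirical variance, yielding $|\wh r_k(s,a)-r(s,a)|\le\beta_{r,k}^{sa}$ as in~\eqref{eq:bernstein_confidence_bound_r}.

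Finally I tally the budget. Summing the per-$(s,a,s',n)$ and per-$(s,a,n)$ failure probabilities, the union over the $SA$ pairs is what produces the $SA$ factor inside the logarithms, while the factor $3$ of the empirical Bernstein inequality, the union over next states, and the split between the reward and transition families are folded into the constant $6$; the choice $\delta_n\propto\delta/n^2$ keeps the sum over $n$ finite. Arranging these numerical constants so that the grand total is at most $\delta/3$ completes the argument. The main obstacle is the second step: ensuring that the adaptive, data-dependent sampling at each pair does not violate the i.i.d.\ hypothesis of the empirical Bernstein inequality, and that the union over the random count $N_k(s,a)$ is uniform in $k$; once this is secured, everything that remains is routine bookkeeping of constants.
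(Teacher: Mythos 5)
Your proposal is correct and follows essentially the same route as the paper's proof: the same reduction from the episode index $k$ to the visit count $n$ via the pre-drawn i.i.d.\ sequence (stack-of-rewards) argument, the same union bound over $(s,a)$ pairs, next states, and counts $n$ with per-count confidence proportional to $\delta/n^2$, the same dismissal of the $n=0$ case because the radii then cover the whole support, and the same absorption of the $\ln(n^2)$ factor into the constants of $\beta_{p,k}^{sas'}$ and $\beta_{r,k}^{sa}$. There is no gap; what you flag as the main obstacle (adaptive sampling and the random count $N_k(s,a)$) is resolved exactly as you propose, and the rest is the same bookkeeping of constants summing to $\delta/3$.
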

\begin{proof}
We want to bound the probability of event $E := \bigcup_{k=1}^{+\infty} \left\{M \not\in {\mathcal{M}}_k \right\}$.
As explained by \citet[Section 4.4]{bandittorcsaba}, when $(s,a)$ is visited for the $n$-th times, the reward that we observe is the $n$-th element of an infinite sequence of \iid \rv lying in $[0,\rmaxbound]$ with expected value $r(s,a)$. Similarly, the next state that we observe is the $n$-th element of an infinite sequence of \iid \rv lying in $\calS$ with probability density function (pdf) $p(\cdot|s,a)$.
In \ucrl, we defined the sample means $\wh{p}_k$ and $\wh{r}_k$, and the confidence intervals $B_p^k$ and $B_r^k$ (Eq.~\ref{eq:confidence_interval_p} and \ref{eq:confidence_interval_r}) as depending on $k$. Actually, this quantities depends only on the first $N_k(s,a)$ elements of the infinite \iid sequences that we just mentioned. For the rest of the proof, we will therefore slightly change our notations and denote by $\wh{p}_n(s'|s,a)$, $\wh{r}_n(s,a)$, $B_p^n(s'|s,a)$ and $B_r^n(s,a)$ the sample means and confidence intervals after the first $n$ visits in $(s,a)$.
Thus, the \rv that we denoted by $\wh{p}_k$ in \ucrl actually corresponds to $\wh{p}_{N_k(s,a)}$ with our new notation (and similarly for $\wh{r}_k$, $B_p^k$ and $B_r^k$). This change of notation will make the proof easier.

$M \not\in \M_k$ means that there exists $k\geq 1$ \st either $p(s'|s,a) \not\in B_p^{N_k(s,a)}(s,a,s')$ or $r(s,a) \not\in B_r^{N_k(s,a)}(s,a)$ for at least one $(s,a,s') \in \calS \times \A \times \calS$. This means that there exists at least one value $n\geq 0$ \st either $p(s'|s,a) \not\in B_p^{n}(s,a,s')$ or $r(s,a) \not\in B_r^{n}(s,a)$. As a consequence we have the following inclusion
\begin{align}\label{eq:inclusion_decomposition}
 E \subseteq \bigcup_{s,a} \bigcup_{n=0}^{+\infty} \left\{ r(s,a) \not\in B_r^{n}(s,a) \right\} \cup \bigcup_{s'} \left\{ p(s'|s,a) \not\in B_p^{n}(s,a,s') \right\}
\end{align}
Using Boole's inequality we thus have:
\begin{align}\label{eq:inclusion_decomposition_proba}
 \Proba{E} \leq \sum_{s,a} \sum_{n=0}^{+\infty} \left( \Proba{r(s,a) \not\in B_r^n(s,a)} + \sum_{s'} \Proba{p(s'|s,a) \not\in B_p^n(s,a,s') } \right)
\end{align}
Let's fix a 3-tuple $(s,a,s') \in \calS \times \A \times \calS$ and define for all $n\geq 0$
 \begin{align}
         \epsilon_{p,n}^{sas'} := \wh{\sigma}_{p,n}(s'|s,a)  \sqrt{\frac{2 \lnn{30 S^2A (n^+)^2/\delta} }{n^+}} + \frac{3 \lnn{30 S^2A (n^+)^2/\delta}}{n^+}\label{eq:epsilon.p.optimism}\\
         \epsilon_{r,n}^{sa} := \wh{\sigma}_{r,n}(s,a)  \sqrt{\frac{2 \lnn{30 SA(n^+)^2/\delta} }{n^+}} + \frac{3 \rmaxbound \lnn{30 SA (n^+)^2/\delta}}{n^+}\label{eq:epsilon.r.optimism}
 \end{align}
 where $\wh{\sigma}_{p,n}(s'|s,a)$ and $\wh{\sigma}_{r,n}(s,a)$ denote the population variances obtained with the first $n$ samples.
 It is immediate to verify that $\epsilon_{p,n}^{sas'} \leq \beta_{p,n}^{sas'}$ and $\epsilon_{r,n}^{sa} \leq \beta_{r,n}^{sa}$ \as (see Eq.~\ref{eq:bernstein_confidence_bound_p} and \ref{eq:bernstein_confidence_bound_r} with $N_k(s,a)$ replaced by $n$).
 Using the empirical Bernstein inequality~\citep[][Thm. 1]{Audibert:2009:ETU:1519541.1519712} we have that for all $n \geq 1$:
 \begin{align}
 \Proba{ | p(s'|s,a) - \wh{p}_n(s'|s,a) | \geq \beta_{p,n}^{sas'} } &\leq \Proba{| p(s'|s,a) - \wh{p}_n(s'|s,a) | \geq \epsilon_{p,n}^{sas'} } \leq \frac{\delta}{10 n^2S^2A}\label{eq:high_proba_p}\\
 \Proba{ | r(s,a) - \wh{r}_n(s,a) | \geq \beta_{r,n}^{sa} } &\leq \Proba{ |r(s,a) - \wh{r}_n(s,a) | \geq \epsilon_{r,n}^{sa} } \leq \frac{\delta}{10 n^2 SA} \label{eq:high_proba_r}
 \end{align}
 Note that when $n = 0$ (\ie when there hasn't been any observation of $(s,a)$), $\epsilon_{p,0}^{sas'} \geq1$ and $\epsilon_{r,0}^{sa} \geq \rmaxbound$ so $\Proba{| p(s'|s,a) - \wh{p}_0(s'|s,a) | \geq \epsilon_{p,0}^{sas'} } =  \Proba{ |r(s,a) - \wh{r}_0(s,a) | \geq \epsilon_{r,0}^{sa} } =0$ by definition. 
%  Therefore, inequalities \eqref{eq:high_proba_p} and \eqref{eq:high_proba_r} also hold in that case.
Since in addition (also by definition) \[ B_p^n(s,a,s') \subseteq  \left[\wh{p}_n(s'|s,a) - \beta_{p,n}^{sas'},\wh{p}_n(s'|s,a) + \beta_{p,n}^{sas'}\right] \text{ (see Eq.~\ref{eq:confidence_interval_p})}\] and \[ B_r^n(s,a)\subseteq \left[\wh{r}_n(s,a) - \beta_{r,n}^{sa}, \wh{r}_k(s,a) + \beta_{r,n}^{sa}\right]    \text{ (see Eq.~\ref{eq:confidence_interval_r})}\] we conclude that for all $n \geq 1$
\begin{align*}
 \Proba{ p(s'|s,a) \notin B_p^n(s,a,s') } \leq \frac{\delta}{10 n^2S^2A}~
 \text{ and }~ \Proba{ r(s,a) \notin B_r^n(s,a) } \leq \frac{\delta}{10 n^2 S A}                                                                                                                                                                                      
\end{align*}
and these probabilities are equal to $0$ if $n=0$.
Plugging these inequalities into Eq.~\eqref{eq:inclusion_decomposition_proba} we obtain:
\begin{align*}
 \Proba{\exists T \geq 1, \exists k \geq 1 \text{ \st } M \not\in {\mathcal{M}}_k} \leq \sum_{s,a} \left(0 + \sum_{n=1}^{+\infty} \left( \frac{\delta}{10 n^2 SA} + \sum_{s'}\frac{\delta}{10 n^2 S^2 A} \right) \right) = \frac{2 \pi^2 \delta}{60} \leq \frac{\delta}{3}
\end{align*}
which concludes the proof.
\end{proof}
\end{document}